\theoremstyle{plain}%
\newtheorem{theorem}{Theorem}
\newtheorem{definition}{Definition}
\newtheorem{proposition}{Proposition}%
\newtheorem{corollary}{Corollary}%
\newtheorem{lemma}{Lemma}%
\newtheorem{remark}{Remark}
\newtheorem{assumption}{Assumption}
\definecolor{bred}{rgb}{0.8,0,0}
\def \one{\ \hbox{I\hskip-.60em 1}}
\newcommand{\E}{\mathbb{E}}
\newcommand{\ts}{\bar{\theta}^\lambda_s}
\newcommand{\tis}{\bar{\theta}^\lambda_{\lfloor s \rfloor}}
\newcommand{\tht}{\bar{\theta}^\lambda_t}
\newcommand{\zs}{\bar{\zeta}^{\lambda,n}_s}
\newcommand{\zt}{\bar{\zeta}^{\lambda,n}_t}
\newcommand{\intt}{\int_{nT}^t}
\renewcommand\subsubsection{\@startsection{subsubsection}{3}{\z@}%
  {-3.25ex\@plus -1ex \@minus -.2ex}%
  {1.5ex \@plus .2ex}%
  {\normalfont\small\bfseries}}%
\titleformat{\section}[block]
  {\normalfont\normalsize\bfseries}{\thesection.\ \ }{0em}{}
\titleformat{\subsection}[block]
  {\normalfont\normalsize\bfseries}{\thesubsection.\ \ }{0em}{}
\titleformat{\subsubsubsection}[block]
  {\normalfont\small\bfseries}{\thesubsubsubsection.}{0em}{}
\title{\Large The Performance Of The Unadjusted Langevin Algorithm Without Smoothness Assumptions}
\author[3]{Tim Johnston}
\author[2,4]{Iosif Lytras}
\author[1]{Nikolaos Makras}
\author[1,2,4]{Sotirios Sabanis}
\affil[1]{School of Mathematics, University of Edinburgh, UK}
\affil[2]{National Technical University of Athens, Greece}
\affil[3]{Université Paris Dauphine-PSL, Ceremade, France}           
\affil[4]{Archimedes/Athena Research Centre, Greece}
\date{\small \today}
\begin{document}
\maketitle
\begin{abstract}
        \noindent In this article, we study the problem of sampling from distributions whose densities are not necessarily smooth nor logconcave. We propose a simple Langevin-based algorithm that does not rely on popular but computationally challenging techniques, such as the Moreau-Yosida envelope or Gaussian smoothing, and show consequently that the performance of samplers like ULA does not necessarily degenerate arbitrarily with low regularity. In particular, we show that the Lipschitz or H\"older continuity assumption can be replaced by a geometric one-sided Lipschitz condition that allows even for discontinuous log-gradients. We derive non-asymptotic guarantees for the convergence of the algorithm to the target distribution in Wasserstein distances. Non-asymptotic bounds are also provided for the performance of the algorithm as an optimizer, specifically for the solution of associated excess risk optimization problems.
\end{abstract}
\section{Introduction}
Sampling from non-smooth potentials arises in various fields, including Bayesian inference with sparsity-promoting priors, non-smooth optimization problems, and constrained sampling in physics and computational statistics. Traditional Markov Chain Monte Carlo (MCMC) methods, such as the Metropolis-Hastings algorithm, often encounter difficulties in exploring distributions defined by non-differentiable energy functions due to their reliance on local gradient information for efficient proposal mechanisms.
Langevin dynamics provides a natural framework for sampling from a target distribution $\pi_\beta (x) \propto$ $e^{-\beta u(x)}$, where $u(x)$ is a potential function. Let $\theta_0$ be an $\mathbb{R}^d$-valued random variable, $\beta>0$ the inverse temperature parameter, and $(B_t)_{t\geq0}$ a $d$-dimensional Brownian motion. Given $Z_0=\theta_0$, the overdamped Langevin equation
\begin{align}
d Z_t=-\nabla u\left(Z_t\right) d t+\sqrt{2\beta^{-1}} d B_t,\ t\in[0,\infty) \label{eq:00}
\end{align}
drives a diffusion process whose stationary distribution matches $\pi_{\beta}(x)$. However, in non-smooth settings where $u(x)$ lacks differentiability, the gradient $\nabla u(x)$ may not be well-defined, leading to difficulties in simulating Langevin dynamics directly. Such challenges arise in problems involving $\ell_1$ regularization (as in LASSO), total variation priors, and energy-based models with discontinuous potentials. There has been a vast literature in sampling from non-smooth potentials through Langevin dynamics where people either use smoothing techniques such as the Moreau-Yosida envelope \cite{pereyra2016proximal,brosse2017sampling,durmus2022proximal} or Gaussian smoothing \cite{chatterji2020langevin,laumont2022bayesian,nguyen2021unadjusted}, or other more direct and computationally efficient methods such as \cite{lehec2023langevin,johnston2023convergence,habring2024subgradient,Habring}. This topic is relevant for practitioners since it is known that loss landscapes in application are not necessarily smooth, see \cite{wang2023fractal}. \vspace{\baselineskip}

\noindent Despite extensive efforts in the field, our understanding of the literature remains primarily focused on the logconcave case, which leads to the following question that this work seeks to address rigorously:

\vspace{\baselineskip}

\noindent{\it Can we design a simple, computationally efficient and explicit algorithm to sample from non-smooth non-logconcave distributions?}\vspace{\baselineskip}

\noindent This article advances the current state of the art in Langevin-based sampling from non-smooth potentials, extending the focus beyond logconcavity to encompass semi-logconcavity, by providing a simple, computationally efficient algorithm for which non-asymptotic convergence guarantees are obtained in Wasserstein distances.

\noindent As we gradually move towards potentials that are non-logconcave, a second challenge of this work is to establish connections with non-convex optimization in directions that are important for computational statistics, inverse problems, and machine learning.
Intuitively, by the known fact that $\pi_\beta$ concentrates around the (global) minimizers of $u$ for large values of $\beta$, see \cite{hwang,Raginsky}, it seems natural that 
our algorithm is well placed to solve (expected) excess risk optimization problems of the form $ u(\hat{\theta})-\inf_{\theta\in\mathbb{R}^d}(u(\theta))$, where $\hat{\theta}$ is an estimator of a global minimizer $\theta^*$. This leads us to a second challenge:

\vspace{\baselineskip}

\noindent{\it{Can this sampling algorithm perform as an optimizer in the associated expected excess risk optimization problem?}}

\vspace{\baselineskip}

\noindent To answer this question we produce a result of the form 
\[\E [u(\theta^\lambda_n)]-u(\theta^*)\leq C \left(W_2(\mathcal{L}(\theta^\lambda_n),\pi_\beta) + \beta^{-1}\log(\beta)\right), \]
where $C$ is independent of the variables under discussion and $(\theta_n^{\lambda})_{n\geq0}$ denotes the iterates of our proposed algorithm. Moreover, the first term is controlled by the sampling guarantees of our algorithm, while the second term decays for large $\beta$. Our approach combines new findings  in non-smooth, non-logconcave sampling with expected excess risk estimates,  thereby presenting the first such contribution in the Langevin-based sampling literature for non-smooth potentials.
\subsection{Related Literature}
Throughout the last decade there has been a remarkable progress in the field of sampling with Langevin-based algorithms.
The vast majority of the literature deals with potentials that are differentiable and can be categorized with respect to gradient smoothness.
\subsubsection{Results for potentials with Lipschitz-smooth gradients}
This assumption is ever present in the literature in a great deal of works. Under the assumption of convexity and gradient Lipschitz continuity important results are obtained in \cite{dalalyan, unadjusted, aew, hola, convex}, while in the non-convex case, under convexity at infinity or dissipativity assumptions, one may consult \cite{berkeley,majka2018non,erdogdu2022convergence} for ULA while for the Stochastic Gradient variant (SGLD) important works are \cite{Raginsky,nonconvex,zhang2023nonasymptotic}. More recently, starting with the work of \cite{vempala2019rapid} important estimates have been obtained under the assumption that the target measure $\pi_\beta$ satisfies an isoperimetric inequality see, \cite{mou2022improved,erdogdu2022convergence,chewi2021analysis}.
\subsubsection{Results for non-Lipschitz smooth gradients}
Recently there has been a lot of effort in exploring settings beyond Lipschitz gradient continuity.\\ 

\noindent{\normalfont\small\bfseries(A) Locally Lipschitz gradients}\\
In the case of locally Lipschitz gradients (where the gradient is allowed to grow superlinearly) there has been important work using Langevin algorithms based on the taming technique starting with \cite{tula,johnston2023kinetic}, for strongly convex potentials, while in the non-convex case key references are \cite{neufeld2022non} using a convexity at infinity assumption,
\cite{lytras2023taming,lytras2024tamed} for results under the assumption of a functional inequality and \cite{TUSLA,lim2021polygonal} for results involving stochastic gradients.\\

\noindent{\normalfont\small\bfseries(B) H\"{o}lder continuous gradients}\label{subsec: Holder continuous gradients}\\
In order to deal with potentials with thin tails, recently, there has been a lot of effort to relax the gradient Lipschitz continuity assumption with a H\"{o}lder one. The first results were obtained under a dissipativity assumption in \cite{erdogdu2021convergence,nguyen2021unadjusted} which was later dropped to provide results in R\'enyi divergence under relaxed conditions in \cite{chewi2021analysis,mousavi2023towards} under a Poincar\'e and weak Poincar\'e inequality and for the underdamped Langevin algorithm in \cite{zhang2023improved}. However all these results degenerate with the H\"older regularity of the coefficients, that is the upper bound on the algorithm is arbitrary large in the low regularity case. We show using Assumption \hyperlink{A4}{A4} that the H\"older regularity condition can be replaced by a geometric condition, and that one obtains explicit sampling bounds even in the case of discontinuous log-gradients in a non-convex setting.
\subsubsection{Results for non-smooth potentials}
Sampling from densities where the potential is not differentiable is a very prominent problem with both theoretical and practical interest for fields like inverse problems and Bayesian inference.
Classical example in statistics is regression with Lasso priors or $L_1$-loss and non-smooth regularization functionals in Bayesian imaging. Consequently, since vanilla ULA relies on access to the gradient of the potential, which does not exist in the non-smooth setting, there is a significant gap in the current theory that remains to be addressed. To tackle this problem, two main approaches have been used so far: subgradient algorithms and smoothing techniques.\\

\noindent{\normalfont\small\bfseries(A) Smoothing techniques}\\
Smoothing techniques have been the go-to methodology for the majority of works. The earliest contribution in this direction applied the Moreau-Yosida ULA (MYULA) framework, as reported in \cite{pereyra2016proximal,brosse2017sampling,durmus2022proximal}. The algorithm is based on the use of the Moreau-Yosida envelope. Essentially, one first samples from an approximating measure that has a Lipschitz-smooth log-gradient and then connects it with the original target measure. Important results have been obtained in total variation. Extensions of these works have been incorporating Metropolis steps resulting in the Proximal Metropolis Langevin Algorithm, see \cite{cai2022proximal,pereyra2016proximal}. Although these works have achieved rigorous results, their main drawback is the added computational burden at each iteration due to the computation of the MY envelope. Efforts to reduce this computational cost have been made through inexact proximal mapping in \cite{ehrhardt2024proximal}, where the results are limited to the class of logconcave distributions.
\\
Another popular smoothing technique involves smoothing the density by applying the Gaussian kernel to the subgradients and sampling from the smoothed potential, which approximates the original target, see \cite{chatterji2020langevin,laumont2022bayesian,nguyen2021unadjusted}. A drawback of these interesting results is that they are obtained under additional smoothness assumptions for the gradients and also increase the computational burden at each iterate.\\ 

\noindent {\normalfont\small\bfseries(B) Subgradient algorithms}\\
Another important class of algorithms involves the use of subgradients. Initial progress was made by  \cite{aew} which was subsequently adapted to achieve improved convergence results in \cite{habring2024subgradient}.
The work in \cite{Habring} weaken the required assumptions by permitting linear growth of the subgradient, extending the previous framework where the subgradient was assumed to be the sum of a Lipschitz function and a globally bounded coefficient. Under similar assumptions in the logconcave case, the work in \cite{lehec2023langevin} serves as another key reference, where the authors first derive results for constrained sampling, yielding results for logconcave measures with full support.
In parallel to these developments, \cite{johnston2023convergence} obtained related results establishing Wasserstein-type bounds under either piecewise Lipschitz continuity or linear growth. Their analysis, however, requires, strongly convex potentials. Although substantial progress has been made on sampling from non-smooth potentials, the non-convex setting remains comparatively less explored.
\subsubsection{Euler Scheme Approximations}

Recently, significant progress has been made in the numerical analysis literature on the subject of SDEs with discontinuous drift coefficient, see \cite{MULLERGRONBACH2024101870} for a survey. In particular, the performance of the Euler scheme with discontinuous coefficients was investigated in \cite{articletl,10.1214/22-AAP1867} and many others, and lower bounds established in \cite{nonliplowerbound,10.1016/j.jco.2023.101822}.
\begin{table*}[t]
\caption{Comparison of algorithmic complexity across existing literature.}
\centering
\begin{tabular}{lcccccc}
\toprule
	&$W_1$
    &$W_2$
    & $KL$
    &$TV$
    &\textsc{Convexity}
	&\textsc{Subgradient}
    \\
\midrule
\cite{lehec2023langevin} 
    &-
	&$\Theta\left(\epsilon^{-2}\right)$
    &-&-
    &convex
	&linear growth
	\\
	\\
\cite{johnston2023convergence}
	&-
	&$\Theta\left(\epsilon^{-4}\right)$
    &-
    &-
    &strongly convex
	&linear growth
	\\
	\\
\cite{habring2024subgradient}
	&-
	&-
    &$\Theta\left(\epsilon^{-3}\right)$
    &$\Theta\left(\epsilon^{-6}\right)$
    &convex
	&bounded
	\\
	\\
\cite{Habring}
    &-
    &$\Theta\left(\epsilon^{-2}\right)$
    & -
    &-
    &strongly convex
	&linear growth
	\\
    \\
    Present work (under \hyperlink{A2}{A2})
	&$\Theta\left(\epsilon^{-4}\right)$
	&$\Theta\left(\epsilon^{-8}\right)$
    &-
    &-
    &semi-convex
	&linear growth
	\\
	\\
    Present work (under \hyperlink{A5}{A5})
    &$\Theta\left(\epsilon^{-2}\right)$
    &$\Theta\left(\epsilon^{-4}\right)$
     &-
    &-
    &semi-convex
	&linear growth
	\\
\bottomrule
\end{tabular}
\end{table*}
\subsection{Summary of contributions and comparison with literature}
This article aims to expand the state of the art in Langevin-based sampling from non-smooth potentials beyond logconcavity, specifically to semi-logconcavity (for the rigorous definition see \cite{Cattiaux2014}), and to establish connections with non-convex optimization.
The contributions of our work can be summarized as follows:
\begin{itemize}
    \item We provide rigorous results for the treatment of SDEs with discontinuous drifts beyond logconcavity.
    \item For stepsize $\lambda$, we achieve $\lambda^{1/4}$ rates in $W_1$ distance and $\lambda^{1/8}$ in $W_2$ distance for our algorithm to the target measure.
    To the best of our knowledge, these are the first results under such weak assumptions.
    \item We utilize these findings to derive explicit bounds for the associated (expected) excess risk optimization problem, thereby presenting the first such contribution in the Langevin-based sampling literature for non-smooth potentials.
\end{itemize}
The following table compares the performance of our algorithm with methods that do not rely on smoothing techniques, thereby avoiding the additional complexity such techniques introduce.
Our results compare favorably with the state of the art, although the rate of convergence is influenced by the factthat the analysis is carried out in a non-convex setting.
The presence of non-convexity prevents the use of certain tools, such as the $W_1$-$TV$ relations developed in \cite{lehec2023langevin} or the $W_2$-$KL$ connections used in the convergence proof in \cite{Habring}. Another important element when comparing with \cite{Habring} is the fact that our constants are explicit.
\subsection{Notation}
\indent We introduce some basic notation. For $x,y\in\mathbb{R}^d$, define the scalar product $\langle x,y\rangle=\sum_{i=1}^d x_i y_i$ and the Euclidian norm $|x|=\sqrt{\langle x,x\rangle}$. For all continuously differentiable functions $f:\mathbb{R}^d\to\mathbb{R}$, $\nabla f$ denotes the gradient. The integer part of a real number $x$ is denoted by $\lfloor x \rfloor$. For an $\mathbb{R}^d$-valued random variable $Z$, its law on $\mathcal{B}(\mathbb{R}^d)$, i.e. the Borel sigma-algebra of $\mathbb{R}^d$, is denoted by $\mathcal{L}(Z)$. We denote by $\mathcal{P}(\mathbb{R}^d)$ the set of all probability measures on $\mathcal{B}(\mathbb{R}^d)$ and for any $p\in\mathbb{N}$, $\mathcal{P}_p(\mathbb{R}^d)=\{\pi\in\mathcal{P}(\mathbb{R}^d):\int_{\mathbb{R}^d}|x|^p d\pi(x)<\infty\}$ denotes the set of all probability measures over $\mathcal{B}(\mathbb{R}^d)$ with finite $p$-th moment. For any two probability measures $\mu$ and $\nu$, we define the Wasserstein distance of order $p\geq 1$ as
$$W_p(\mu,\nu)=\left(\inf_{\zeta\in\prod(\mu,\nu)}\int_{\mathbb{R}^d\times\mathbb{R}^d}|x-y|^p d\zeta(x,y)\right)^{1/p},$$
where $\prod(\mu,\nu)$ is the set of all transference plans of $\mu$ and $\nu$. Moreover, for any $\mu,\ \nu\in\mathcal{P}_p(\mathbb{R}^d) $, there exists a transference plan $\zeta^*\in\prod(\mu,\nu)$ such that for any coupling $(X,Y)$ distributed according to $\zeta^*$, $W_p(\mu,\nu)=\mathbb{E}^{1/p}\left[\left|X-Y\right|^p\right]$.
\section{The Non-Convex Setting}
\subsection{Subdifferentiability for non-smooth functions - subgradients}
Given that the potentials discussed in this article are non-smooth, it is natural to describe them using the concept of subdifferentials.
For any $x \in \mathbb{R}^d$ and any $u: \mathbb{R}^d \rightarrow \mathbb{R}$, the subdifferential $\partial u(x)$ of $u$ at $x$ is defined by
$$
\partial u(x):=\left\{p \in \mathbb{R}^d: \liminf _{y \rightarrow x} \frac{u(y)-u(x)-\langle p, y-x\rangle}{|y-x|} \geq 0\right\}.
$$
The subdifferential is a closed convex set, possibly empty.
If $u$ is a convex function, the above set coincides with the well-known subdifferential of convex analysis, which captures all relevant differential properties of convex functions.
Similar nice properties exist in the case of a larger class of functions, namely the class of semi-convex functions.
\begin{definition}
    Let $u:\mathbb{R}^d\to\mathbb{R}$, we say that $u$ is K-semi-convex if and only if there exists $K\geq0$ such that the function $x\to u(x)+\frac{K}{2}|x|^2$ is convex.
\end{definition}
\begin{lemma}[\cite{alberti1992singularities}, Proposition 2.1, adapted]
Let $u$ be a semi-convex function. Then, $u$ is locally Lipschitz continuous, the sets $\partial u(x)$ are non-empty, compact, and $p \in$ $\partial u(x)$, if and only if
\[ u(y)-u(x)-\langle p, y-x\rangle \geq-\frac{K}{2}|y-x|^2 \quad \forall x, y \in \mathbb{R}^d.\]
\end{lemma}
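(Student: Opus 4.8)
The plan is to reduce everything to classical facts about finite-valued convex functions on $\mathbb{R}^d$ via the substitution $g:=u+\frac{K}{2}|\cdot|^2$, which is convex precisely by the definition of semi-convexity. I would first recall the standard properties of such a $g$: it is locally Lipschitz continuous; at every $x$ the convex-analytic subdifferential $\partial^c g(x):=\{q\in\mathbb{R}^d:\ g(y)\geq g(x)+\langle q,y-x\rangle\ \text{for all }y\}$ is non-empty, convex and compact; and the one-sided directional derivative $g'(x;v)=\lim_{t\downarrow 0}\frac{g(x+tv)-g(x)}{t}$ exists for every $v$, with $g'(x;v)=\max_{q\in\partial^c g(x)}\langle q,v\rangle$. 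Since $u=g-\frac{K}{2}|\cdot|^2$ is the difference of a locally Lipschitz function and a smooth one, $u$ is locally Lipschitz, which gives the first assertion.

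Next I would identify the subdifferential $\partial u(x)$ in the statement (the ``$\liminf$'' subdifferential) with a translate of $\partial^c g(x)$. Using $\frac{K}{2}|y|^2-\frac{K}{2}|x|^2=\langle Kx,y-x\rangle+\frac{K}{2}|y-x|^2$, for any $p\in\mathbb{R}^d$ and $q:=p+Kx$ one has
\[
\frac{u(y)-u(x)-\langle p,y-x\rangle}{|y-x|}=\frac{g(y)-g(x)-\langle q,y-x\rangle}{|y-x|}-\frac{K}{2}|y-x|,
\]
and since the last term vanishes as $y\to x$, $p\in\partial u(x)$ if and only if $q=p+Kx$ lies in the ``$\liminf$'' subdifferential of $g$. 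It therefore suffices to show that for the convex function $g$ this set coincides with $\partial^c g(x)$.

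One inclusion is immediate: if $q\in\partial^c g(x)$ then $g(y)-g(x)-\langle q,y-x\rangle\geq 0$ pointwise, so the $\liminf$ is $\geq 0$. For the converse, suppose $\liminf_{y\to x}\frac{g(y)-g(x)-\langle q,y-x\rangle}{|y-x|}\geq 0$. Fixing a unit vector $v$ and restricting to $y=x+tv$ with $t\downarrow 0$, the $\liminf$ condition forces $g'(x;v)-\langle q,v\rangle\geq 0$; by convexity the difference quotient $t\mapsto\frac{g(x+tv)-g(x)}{t}$ is non-decreasing, so $g(x+v)-g(x)\geq g'(x;v)\geq\langle q,v\rangle$, and rescaling $v$ arbitrarily yields $g(y)-g(x)\geq\langle q,y-x\rangle$ for all $y$, i.e. $q\in\partial^c g(x)$. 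Hence $\partial u(x)=\partial^c g(x)-Kx$, which is non-empty and compact; and unwinding the displayed identity with ``$\geq 0$'' replaced by the pointwise convex inequality gives exactly $u(y)-u(x)-\langle p,y-x\rangle\geq-\frac{K}{2}|y-x|^2$ for all $y$.

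The main obstacle is precisely this upgrade from the infinitesimal ``$\liminf$'' condition to the global inequality; everything else is a bookkeeping translation through $g$. As indicated, the upgrade is handled by the monotonicity of difference quotients of a convex function along rays, which converts the one-sided directional information provided by the $\liminf$ into the full affine minorant at $x$.
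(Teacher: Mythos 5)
Your proposal is correct. The paper does not actually prove this lemma---it cites Alberti, Ambrosio, and Cannarsa (Proposition 2.1) and takes the statement as given---so there is no in-paper argument to compare against; but your reduction to the convexification $g=u+\tfrac{K}{2}|\cdot|^2$, the algebraic identity translating $\partial u(x)$ into $\partial^c g(x)-Kx$, and the upgrade from the $\liminf$ condition to the global affine minorant via monotonicity of convex difference quotients along rays, is exactly the standard route for semi-convex functions and matches the spirit of the cited reference.
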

\label{corollary_0}\begin{corollary}
    Let $x,y\in\mathbb{R}^d$, $p\in \partial u(x)$ and $q\in \partial u(y).$ Then, \[\langle p-q,x-y\rangle\geq -K|x-y|^2. \]
\end{corollary}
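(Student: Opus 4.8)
The plan is to derive this directly from the preceding Lemma, which gives a two-sided (semi-convexity) estimate for each subgradient. The key observation is that a single subgradient inequality is ``one-sided'' in the sense that it bounds the error of the linear approximation from below; to obtain the stated monotonicity-type estimate we must combine two such inequalities, one anchored at $x$ and one anchored at $y$, so that the function values cancel.

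Concretely, first I would apply the Lemma to $p\in\partial u(x)$, testing the inequality at the point $y$, to obtain
\[
u(y)-u(x)-\langle p, y-x\rangle \geq -\frac{K}{2}|y-x|^2 .
\]
Next I would apply the Lemma to $q\in\partial u(y)$, this time testing at the point $x$, to obtain
\[
u(x)-u(y)-\langle q, x-y\rangle \geq -\frac{K}{2}|x-y|^2 .
\]
Then I would add the two displays. The terms $u(y)-u(x)$ and $u(x)-u(y)$ cancel, and since $\langle p,y-x\rangle = -\langle p, x-y\rangle$ the left-hand side collapses to $\langle p-q, x-y\rangle$, while the right-hand side is $-K|x-y|^2$ (using $|y-x|=|x-y|$). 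This yields exactly $\langle p-q,x-y\rangle \geq -K|x-y|^2$.

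There is essentially no obstacle here: the only point requiring care is the bookkeeping of signs when swapping the roles of $x$ and $y$ in the bilinear terms, and the observation that the Lemma's characterization is valid for every test point in $\mathbb{R}^d$ (in particular for the ``other'' point), which is precisely what makes the cancellation of the $u$-values legitimate. I would simply remark that this is the non-smooth analogue of the standard fact that $\nabla u + K\,\mathrm{Id}$ is monotone when $u+\tfrac{K}{2}|\cdot|^2$ is convex.
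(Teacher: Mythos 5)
Your argument is correct and is the standard (and evidently intended) derivation: the paper states this corollary without proof immediately after the Lemma, and adding the two subgradient inequalities anchored at $x$ and at $y$ is exactly how it follows. Nothing to add.
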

\noindent At the points where $u$ is differentiable it holds that,
$\partial u(x)=\{\nabla u(x)\}$.
From these results, one can see that the class of semi-convex functions is an ideal starting point to proceed from convexity to non-convexity, as all the elements of the subdifferential set satisfy an one-sided Lipschitz continuity property.
\subsection{Assumptions}
For clarity and brevity reasons, it is assumed that, henceforth, $h(x)$ denotes an element of $\partial u(x)$, for any $x\in \mathbb{R}^d$. We proceed with our main assumptions. \\
\hypertarget{A1}{} \begin{assumption} The gradient of $u$ exists almost everywhere and each subgradient grows at most linearly. That is, there exist $L,m>0$, such that for each subgradient $h\in\partial u$
\begin{align}
    |h(x)|\leq m + L|x|,\ \forall x\in\mathbb{R}^d.\label{eqA1}
\end{align}
\end{assumption}
\noindent This assumption allows the use of explicit numerical algorithms based on popular discretization schemes such as Euler-Maruyama, and is on par with the weakest assumptions (in the presence of discontinuous) in the related literature.\\
\hypertarget{A2}{}\begin{assumption} The potential is strongly convex at infinity (outside a compact set). That is, there exist $\mu>0$ and $R>0$, such that, for $x, \, y\in\mathbb{R}^d$,
\begin{align}
    \langle h(x)-h(y) , x-y\rangle \geq \mu|x-y|^2,\  \text{ if } |x-y|\geq R.\label{eqA2}
\end{align}
\end{assumption}
\noindent Assumption \hyperlink{A2}{A2} is essentially a geometric condition that is crucial for obtaining contraction results in Wasserstein distances. Furthermore, combined with Assumption \hyperlink{A1}{A1}, they yield the following dissipativity property:
\begin{lemma} \label{remark1}
Let Assumptions \hyperlink{A1}{A1} and \ \hyperlink{A2}{A2} hold, then $h$ is dissipative. That is, there exists $b>0$, such that\begin{align}
    \langle x,h(x)\rangle\geq \dfrac{\mu}{2}|x|^2-b,\ \forall x\in\mathbb{R}^d.\label{eqR1}
\end{align}
\end{lemma}
\begin{proof}
    The proof is postponed to Appendix \hyperlink{proofRemark}{G}.
\end{proof}
 \noindent This is a growth condition that guarantees uniform control of (polynomial) moments for both the proposed (explicit) algorithm and for the associated Langevin stochastic differential equation.
\hypertarget{A3}{}\begin{assumption} The initial condition of the algorithm is an $\mathbb{R}^d$-valued random variable with finite 2nd moment, i.e.
\begin{align}
    \mathbb{E}|\theta_0|^2<\infty\label{eqA3}.
\end{align}
\end{assumption}
\hypertarget{A4}{}\begin{assumption} The potential $u$ is $K$- semi-convex. That is, there exists $K\geq0$, such that $u+\frac{K}{2}|\cdot|^2$ is convex. Due to Corollary \ref{corollary_0}, the following equivalent property for the subgradient holds
\begin{align}
    \langle h(x)-h(y) , x-y\rangle \geq -K|x-y|^2,\ \forall x,y\in\mathbb{R}^d. \label{eqA4}
\end{align}
\end{assumption}
\noindent The last of our main assumptions, Assumption \hyperlink{A4}{A4}, characterizes the lack of smoothness for the subgradients in our article. Assumption \hyperlink{A4}{A4} is geometric in nature, and is often referred to as a `one-sided Lipschitz assumption'. This suggests that algorithm performance is not hindered by regularity in the way often suggested in the literature, and that `bad' regularity in `some' directions does not necessarily hinder performance arbitrarily (see Section \ref{subsec: Holder continuous gradients} for more details). It is key to our approach in proving contraction estimates necessary for solving associated sampling and (possibly non-convex) optimization problems. In essence, \hyperlink{A2}{A2} ensures that any two sufficiently separated trajectories are, on average, driven together, whereas \hyperlink{A4}{A4}, which provides a lower bound on the local negative curvature, guarantees that once they are close they cannot separate too aggressively.
\section{Main Results}
The Subgradient Unadjusted Langevin Algorithm $(\theta_n^{\lambda})_{n\geq0}$, is given by the Euler-Maruyama discretisation scheme of \eqref{eq:00}, in particular
\begin{flalign}
\mbox{(\textbf{SG-ULA}):} \qquad \qquad \quad  \qquad \theta_{n+1}^{\lambda}=\theta_n^{\lambda}-\lambda h(\theta_n^{\lambda})+\sqrt{2\lambda\beta^{-1}}{\xi}_{n+1},\ \theta_0^{\lambda}=\theta_0,\ n\in\mathbb{N}\label{ULA}, &&
\end{flalign}
where  $\lambda>0$ is the stepsize of the algorithm, $\beta>0$ is the inverse temperature parameter, $(\xi_n)_{n\geq1}$ is a sequence of i.i.d. standard Gaussians on $\mathbb{R}^d$ and $h(x) \in \partial u(x)$, for all $x \in \mathbb{R}^d$.
One can easily observe that the algorithm is an Euler-Maruyama discretization of a Langevin SDE with drift coefficient an element of the subdifferentials. One can further understand that the newly proposed algorithm is significantly easier to implement than popular algorithms which rely on smoothing techniques, such as MYULA, since these smoothing procedures increase the computational cost per iteration. Furthermore, our algorithm offers a generalization of ULA (since $u$ is assumed to be differentiable almost everywhere) and coincides with ULA when $u$ is continuously differentiable. To address the issue of having differentiability almost everywhere (and not for every $x \in \mathbb{R}^d$), we choose a subgradient for every $x \in \mathbb{R}^d$ and thus define $h$ for all $x \in \mathbb{R}^d$. Note again that at the points where $u$ is differentiable it holds that $\partial u(x)=\{\nabla u(x)\}$.
\subsection{Theoretical Guarantees}
\begin{theorem}\label{Theorem1}Let Assumptions \hyperlink{A1}{A1}-\hyperlink{A4}{A4}  hold and set $\lambda_0=\min\{\mu/(2L^2),1\}$.  For any $\lambda\in(0,\lambda_0)$ and $N\in \mathbb{N}$, the subgradient unadjusted Langevin algorithm (SG-ULA) given in \eqref{ULA} satisfies, for each $p\in\{1,2\}$,
\begin{align}
     W_p\left(\mathcal{L}(\theta_N^{\lambda}),\pi_{\beta}\right)\leq C_{W_p}e^{-C_{r_p}\lambda N}\Delta_0^{(p)}+C_{T_p}\lambda^{1/(4p)}.
\end{align}
The initialization terms are $\Delta_0^{(1)}=W_1(\mathcal{L}(\theta_0),\pi_{\beta})$,  $\Delta_0^{(2)}=\max\{W_2(\pi_{\beta},\mathcal{L}(\theta_0)),{W_1(\pi_{\beta},\mathcal{L}(\theta_0))}^{1/2}\}$. The constants $C_{W_p}=\mathcal{O}(\beta^{1-p}e^{R^2\beta^{(p+1)/2}})$ and $C_{r_p}=\mathcal{O}(\beta^{1-p/2}e^{-R^2\beta^{(p+1)/2}})$ do not depend on the dimension $d$, and $C_{T_p}=\mathcal{O}(d)$.
\end{theorem}
\begin{proof}
    The proof is postponed to Appendix \hyperlink{proofTh1}{E}.
\end{proof}
\noindent As discussed in Remark \ref{newremark}, the dependence on the stepsize can be improved to $\lambda^{1/(2p+2p\epsilon)}$ in Theorem \ref{Theorem1}, for any $\epsilon>0$, at the expense of a stronger dependence of the constant on the dimension.
\begin{corollary}
Let $\epsilon>0$. Then, for $\lambda<\min\{\lambda_{0},\frac{\epsilon^4}{16C_{T_1}^4}\}$, one needs $N\geq \mathcal{O}(\epsilon^{-4}16C_{T_1}^4C_{r_1}^{-1}\log(2C_{W_1}\Delta_0^{(1)}/\epsilon))$ iterations to achieve 
\[W_1(\mathcal{L}(\theta^\lambda_N),\pi_\beta)\leq \epsilon.\]
\end{corollary}
\begin{corollary}\label{cor-w2} Let $\epsilon>0.$ Then, for $\lambda<\min\{\lambda_{0},\frac{\epsilon^8}{16^2C_{T_2}^8}\}$, one needs $N\geq \mathcal{O}(\epsilon^{-8}16^2C_{T_2}^8C_{r_2}^{-1} \log(2C_{W_2}\Delta_0^{(2)}/\epsilon))$ iterations to achieve 
        \[W_2(\mathcal{L}(\theta^\lambda_N),\pi_\beta)\leq \epsilon. \]
\end{corollary}
\noindent The above results exhibit a mild dependence on the dimension. This arises because \eqref{eqA2} and \eqref{eqA4} yield dimension free contraction estimates, while the remaining dimensional dependence enters through the moment bounds obtained via dissipativity (Lemma\ref{remark1}) and linear growth (\hyperlink{A1}{A1}). The bounds depend on the constants $\beta$, $\mu$, $K$ and $R$, this reflects an inherent limitation of analysis in non-convex settings, where such constants cannot, in general, be avoided. Essentially the magnitude of $R$ and $K$ quantify the size of the region, where the potential $u$ exhibits non-convex behavior. All the constants appearing in Theorem \ref{Theorem1} are given explicitly in Proposition \ref{prop2} and  \ref{prop3}, a summary can be found in Table \hyperlink{table}{2}.\\
By enforcing slightly stronger assumptions, the convergence rate in $W_2$ with respect to the stepsize can be improved.
\begin{assumption}\hypertarget{A5}{}
  There exist $R>0$ and $\mu>0$, such that for any $x\in\mathbb{R}^d$ with $|x|\geq R,$ \[\langle h(x)-h(y),x-y\rangle \geq \mu |x-y|^2, \quad \forall y\neq x.\]  In addition, following \cite{monmarche2023wasserstein}, we pose the following restriction on $\beta$:
     \begin{equation}\label{eq-beta}
    \beta\leq \frac{\mu d}{2K+\mu} \frac{1 }{(K+\mu / 4) R_*^2+2 \sup \left\{-\langle x,  h(x)\rangle,|x| \leqslant R_*\right\}},\end{equation}
     where $R_*= R(2+2 K /\mu)^{1 / d}$.
\end{assumption}
\begin{remark}
A simple example of a function in this regime is $u(x)=|x|^2+f+g$ where $f$ is compactly supported on $\bar{B}(0,1)$ with a gradient that is $\frac{1}{2}$-Lipschitz, and $g$ is convex (possibly non-differentiable). The convexity at infinity condition is satisfied with $\mu=\frac{1}{2}$. For $d$ big enough, \eqref{eq-beta} is satisfied for many choices of $\beta$ and especially for $\beta=1$, making the example relevant for sampling. 
\end{remark}
\begin{theorem}\label{theo-impr}
Let Assumptions \hyperlink{A1}{A1}, \hyperlink{A3}{A3}, \hyperlink{A4}{A4} and \hyperlink{A5}{A5} hold.
Let $N\in \mathbb{N}.$  Then, for every $\lambda\in(0,\lambda_0)$, the subgradient unadjusted Langevin algorithm (SG-ULA) given in \eqref{ULA} satisfies \[W_2(\mathcal{L}(\theta^\lambda_N),\pi_\beta)\leq C^{*}_{W_2} e^{-C_{r_3}\lambda N} W_2(\pi_\beta,\mathcal{L}(\theta_0)) + C_{T_3} \lambda^{1/4},\]
where $C_{r_3}$ is independent of the dimension, $C_{T_3}=\mathcal{O}(d)$ and $C_{W_2}^{*}=1+\mathcal{O}(d^{-1})$.\end{theorem}
\begin{proof}
    The proof is postponed to Appendix \hyperlink{proofTh3}{E}.
\end{proof}
\begin{corollary}
    Let $\epsilon >0.$ Then, for $\lambda<\min\{\lambda_0,\frac{\epsilon^4}{16C^4_{T_3}}\}$, one needs $N\geq \mathcal{O}\left(\epsilon^{-4} 4C_{T_3}^4 C_{r_3}^{-1} \log (2C_{W_2}^{*} \Delta^{(3)}_{0}/\epsilon)\right)$ iterations to achieve
    \[W_2\left(\mathcal{L}(\theta^\lambda_N),\pi_\beta\right)\leq \epsilon.\]
\end{corollary}

\noindent One further notes that the proofs of the contraction theorems employed, along with our proof roadmap demonstrating convergence to the algorithm, rely on Gr\"onwall-type arguments, which leads to an exponential dependence on these parameters.\\

\noindent We also show that the  algorithm can serve as an optimizer for associated excess-risk optimization problems. The result is aligned with \cite{Raginsky}, but the proof differs significantly because of the non-Lipschitz setting.
\begin{theorem}\label{Theorem3}Let Assumptions \hyperlink{A1}{A1}-\hyperlink{A4}{A4} hold hold and $\lambda_0=\min\{\mu/(2L^2),1\}$. Then, for any $\beta\geq \max\{4/\mu,M^{-1}\}$, $\lambda\in(0,\lambda_0)$ and $n\in\mathbb{N}$, the following bound holds
\begin{align*}
    \mathbb{E}[u(\theta_n^{\lambda})]-u(\theta^*)&\leq C_{\mathcal{T}_1}W_2\left(\mathcal{L}(\theta_n^{\lambda}),\pi_{\beta}\right) +\dfrac{d}{2\beta}\log\left( \dfrac{2e(b+d/\beta)\beta^2M^2}{\mu d}\right)+\dfrac{d}{\beta}\log(\beta M)-\dfrac{1}{\beta}\log(S_d/d)+\dfrac{2}{\beta},
\end{align*}
where $C_{\mathcal{T}_1}=\mathcal{O}(d^{1/2})$, $S_d=2\pi^{d/2}\Gamma^{-1}(d/2)$ and $M=m+3L/2+L\sqrt{b/(2\mu)}$.
\end{theorem}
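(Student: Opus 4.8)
The plan is to split the excess risk as
\[
\mathbb{E}[u(\theta_n^{\lambda})]-u(\theta^*)=\Big(\mathbb{E}[u(\theta_n^{\lambda})]-\int_{\mathbb{R}^d}u\,d\pi_\beta\Big)+\Big(\int_{\mathbb{R}^d}u\,d\pi_\beta-u(\theta^*)\Big)=:(\mathrm{I})+(\mathrm{II}),
\]
where $(\mathrm{I})$ is a sampling error to be controlled by the Wasserstein distance, and $(\mathrm{II})$ measures the concentration of the Gibbs measure $\pi_\beta$ around the minimiser and is controlled by $\beta$.

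For $(\mathrm{I})$, the crucial point is that although $u$ is not assumed Lipschitz, Lemma~1 together with \eqref{eqA1} give, for all $x,y\in\mathbb{R}^d$ and any $q\in\partial u(x)$,
\[
u(x)-u(y)\le\langle q,x-y\rangle+\tfrac{K}{2}|x-y|^2\le (m+L|x|)\,|x-y|+\tfrac{K}{2}|x-y|^2 .
\]
Taking a coupling $(X,Y)$ of $\mathcal{L}(\theta_n^{\lambda})$ and $\pi_\beta$ that attains $W_2$, applying the Cauchy--Schwarz inequality to the first term, and using that $W_2$ and the second moments of $\mathcal{L}(\theta_n^{\lambda})$ and $\pi_\beta$ are bounded to absorb the quadratic term, one obtains $(\mathrm{I})\le C_{\mathcal{T}_1}W_2\big(\pi_\beta,\mathcal{L}(\theta_n^{\lambda})\big)$, where $C_{\mathcal{T}_1}$ depends only on $m,L,K$ and on $\sup_n\mathbb{E}|\theta_n^{\lambda}|^2$ and $\int_{\mathbb{R}^d}|x|^2 d\pi_\beta$. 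The former is finite and uniform in $n$ by the dissipativity of Remark~\ref{remark1} together with \eqref{eqA1}; the latter follows from stationarity of $\pi_\beta$, since testing the Langevin generator against $|x|^2$ gives $\int_{\mathbb{R}^d}\langle h(x),x\rangle\,d\pi_\beta=d/\beta$, whence $\int_{\mathbb{R}^d}|x|^2 d\pi_\beta\le 2(b+d/\beta)/\mu$ by \eqref{eqR1}.

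For $(\mathrm{II})$, put $v:=u-u(\theta^*)\ge 0$, so that $d\pi_\beta\propto e^{-\beta v}\,dx$; since $\theta^*$ is a global minimiser we have $0\in\partial u(\theta^*)$, and \eqref{eqR1} then forces $|\theta^*|^2\le 2b/\mu$. I would estimate $(\mathrm{II})=\int_{\mathbb{R}^d}v\,d\pi_\beta$ through a partition-function comparison. Integrating \eqref{eqA1} along the segment joining $\theta^*$ to $x$ shows that $v$ grows at most quadratically, and in particular that $v\le M$ on a ball around $\theta^*$ of explicit radius --- this is where the constant $M=3L/2+m+2\mu L\sqrt b$ enters; combined with the layer-cake identity $\int_{\mathbb{R}^d}e^{-\beta v}dx=\beta\int_0^{\infty}e^{-\beta t}\,\mathrm{vol}(\{v<t\})\,dt$ this produces a lower bound on the normalising constant carrying the factor $1-e^{-\beta M}$. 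For the matching upper estimate I would invoke the Jensen / Donsker--Varadhan inequality: for every $0<a<\beta$,
\[
\int_{\mathbb{R}^d}v\,d\pi_\beta\le\frac{1}{a}\,\log\frac{\int_{\mathbb{R}^d}e^{-(\beta-a)v(x)}\,dx}{\int_{\mathbb{R}^d}e^{-\beta v(x)}\,dx},
\]
bounding the numerator by means of the quadratic lower bound on $v$ at infinity supplied by \eqref{eqR1}, together with the moment bound $\int|x|^2 d\pi_\beta\le 2(b+d/\beta)/\mu$, which is what produces the factor $\tfrac{12e(b+d/\beta)L}{\sqrt{2}\,\mu d}$ inside the logarithm; then one optimises over $a$ to balance the $\mathcal{O}(a)$ and $\mathcal{O}(d\log(1/a))$ contributions, arriving at the two logarithmic summands of the statement.

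The main obstacle is the Gibbs-concentration estimate $(\mathrm{II})$. Because $u$ is only semi-convex, near one of its minimisers it may behave linearly (as $|x|$ does at the origin) rather than quadratically, so the normalising constant $\int e^{-\beta v}dx$ admits no clean Gaussian lower bound; one must track the mixed linear/quadratic growth of $v$ carefully, and it is precisely this bookkeeping that dictates the constant $M$ and keeps the dimension dependence at the mild level $\mathcal{O}(d\log\beta/\beta)$. By contrast, the decomposition, the coupling argument for $(\mathrm{I})$, and the moment bounds are routine given Remark~\ref{remark1} and Lemma~1.
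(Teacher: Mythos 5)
Your decomposition into a sampling error $(\mathrm{I})=\mathbb{E}[u(\theta_n^\lambda)]-\mathbb{E}_{\pi_\beta}[u]$ and a concentration term $(\mathrm{II})=\mathbb{E}_{\pi_\beta}[u]-u(\theta^*)$ matches the paper's strategy (Lemmas~\ref{lemmaT1}--\ref{lemmaT2}), and your coupling idea for $(\mathrm{I})$ and partition-function lower bound for $(\mathrm{II})$ are the right ingredients, but both legs take a different technical route from the paper. For $(\mathrm{I})$, the paper does not invoke semi-convexity at all: it writes $u(x)-u(y)=\int_0^1\langle x-y,\,h((1-t)y+tx)\rangle\,dt$ (legitimate because $u$ is locally Lipschitz, hence absolutely continuous along segments), applies linear growth of $h$, and gets the symmetric estimate $u(x)-u(y)\leq\bigl(m+\tfrac{L}{2}|x|+\tfrac{L}{2}|y|\bigr)|x-y|$ with no quadratic remainder; Cauchy--Schwarz against the $W_2$-optimal coupling then gives exactly the constant $C_{\mathcal{T}_1}=m+\tfrac{L}{2}\sqrt{\mathbb{E}|\theta_0|^2}+\tfrac{L}{2}\sqrt{C_\sigma}$. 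Your semi-convexity inequality carries an extra $\tfrac{K}{2}|x-y|^2$ which, after the absorption you describe, yields a $K$- and $W_2$-dependent constant that does not match the statement, so the cleaner FTC-along-segments bound is the argument actually needed. For $(\mathrm{II})$, your Donsker--Varadhan / convexity-of-$\log Z$ comparison is a genuinely different device from the paper's: the paper directly plugs in the free-energy bound $\mathbb{E}_{\pi_\beta}[u]\leq\tfrac{d}{2\beta}\log\bigl(\tfrac{4\pi e(b+d/\beta)}{\mu d}\bigr)-\tfrac{1}{\beta}\log Z$ from Raginsky et al.\ and then lower-bounds $\log Z$ by restricting $\int e^{\beta(u^*-u)}\,dx$ to the unit ball around $\theta^*$, using the linear-growth estimate to get $u-u^*\leq M|x-\theta^*|$ there and a Chernoff bound to extract the $(1-e^{-\beta M})$ factor; no optimization over an auxiliary parameter $a$ is required because the $\tfrac{d}{2\beta}\log(\cdot)$ prefactor comes directly from the entropy comparison. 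Your DV route is plausible in outline and would likely give the same $\mathcal{O}(d\log\beta/\beta)$ rate, but you would still need to carry out the $a$-optimization and the upper bound on $Z(\beta-a)$ explicitly to verify you recover the stated constants. You correctly identified the key structural fact that $M$ is dictated by the mixed linear/quadratic growth of $u-u^*$ near $\theta^*$, and your stationarity argument for $\int|x|^2\,d\pi_\beta\leq 2(b+d/\beta)/\mu$ is exactly what the paper needs for $C_\sigma$.
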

\begin{proof}
    The proof is postponed to Appendix \hyperlink{proofTh4}{E}.
\end{proof}
\noindent Interpreting Theorem \ref{Theorem3}, one notes that for sufficiently large $\beta$ the last four terms in the bound become negligible. One then selects the stepsize and number of iterations, following Corollary \ref{cor-w2} to control the remaining sampling term. In most applications it is advantageous to take $\beta$ as large as permitted, so that the $W_2$ error is effectively governed by the bound in Theorem \ref{Theorem1} rather than Theorem \ref{theo-impr}.
\subsection{Overview of proof techniques}
One needs to first show existence and uniqueness of the solution to the SDEs (Proposition \ref{prop1}) and also establish that the invariant measure exists, is unique (Proposition \ref{prop-inv-ex}) and corresponds to $\pi_\beta$ (Proposition \ref{prop-inv-id}).
This is achieved by adapting standard Lyapunov arguments to show tightness of the measure while the uniqueness is established by the contraction results for $W_1$ and $W_2$ Wasserstein distances.
These results are key elements of our work which enable us to show the convergence of our algorithm to the target measure.
In a nutshell our proof roadmap can be summarized as follows:
\begin{itemize}
    \item By making use of the fact of the convexity outside of a ball property (which yields dissipativity) and the subgradient linear growth property, we are able to provide uniform, in the number of iterations, moment bounds for the algorithm (which are independent of the step-size), Lemma \ref{Lemma2}.
    \item We introduce an auxiliary process (Definition \ref{def1}) which is a Langevin SDE with initial condition a previous iteration of the algorithm for which we able to derive moment bounds, Lemma \ref{Lemma4}.
    \item We control both the $W_1$ and $W_2$ distance between the auxiliary process and the continuous time interpolation of the algorithm. To obtain this result, the one-sided Lipschitz property of the drift coefficient  (which follows from the semi-convexity of the potential) is key to permit the application of Gr\"onwall-type estimates, and along with the uniform control of the moments and the linear growth of the drift, enable us to obtain $\lambda^{1/4}$ rates for $W_1$ and $W_2$ distances  (Lemma \ref{Lemma6}).
    \item The contraction theorems for $W_1$ and $W_2$ enable us to control the Wasserstein distance between the auxiliary process and its corresponding Langevin SDE, by starting from the same initial condition given by the interpolated scheme of the algorithm (Lemmata \ref{Lemma7}, \ref{Lemma8}, \ref{lemma-contr-mon}) .
    \item The final bound is established by the convergence to the Langevin SDE to the invariant measure.
\end{itemize}
To obtain the result for the (expected) excess risk optimization problem, we split the difference in the following way \[\begin{aligned}
    \E [u(\theta_n^{\lambda})]-u(\theta^*)&=\left(\E [u(\theta_n^{\lambda})]-\E[u(\theta_\infty)] \right)+\left(\E[u(\theta_\infty)]  - u(\theta^*)\right),
\end{aligned}\]
where $\mathcal{L}(\theta_\infty)=\pi_\beta$.
For the first term, we use a fundamental theorem of calculus (which can be applied since $u$ is differentiable a.s) and we are able to derive a term that is proportional to the $W_2$ distance between the algorithm and the target measure (Lemma \ref{lemmaT1}).
For the second term, we make use of the fact that $\pi_\beta$ concentrates around the minimizers of $u$ for large $\beta$.
More specifically, we simplify the difference to an integral of the exponential distribution and then use standard concentration inequalities to complete the proof (Lemma \ref{lemmaT2}).
\section{Examples and Numerical Experiments}
\subsection{Mixture of Gaussians with an $L^1$-Laplacian prior}
Consider a target distribution given by a mixture of Gaussians (MoG) likelihood with $K\in\mathbb{N}$ components and an isotropic Laplace prior on the entire data vector $x$, i.e. a prior density $\propto \exp(-\alpha|x|_1)$ where $|x|_1=\sum_{i=1}^d|x_i|$. The unnormalized density is
\begin{align}
    \pi(x)\propto\left(\sum_{j=1}^Kw_j\dfrac{1}{(2\pi\sigma_j^2)^{d/2}}\exp\left(-\dfrac{|x-\mu_j|^2}{2\sigma^2_j}\right)\right)\exp(-\alpha|x|_1),\ x\in\mathbb{R}^d,
\end{align}
with $\sigma_j>0$, $\mu_j\in\mathbb{R}^d$, and $w_j\in[0,1]$, for $j\in\{1,\ldots,K\}$, such that $\sum_{j=1}^Kw_j=1.$ The gradient of the corresponding potential (negative log-density) can be written as
\begin{align}
    \nabla u(x)=\dfrac{\sum_{j=1}^K\dfrac{w_j(x-\mu_j)}{\sigma^2_j(2\pi\sigma_j^2)^{d/2}}\exp\left(-\dfrac{|x-\mu_j|^2}{2\sigma^2_j}\right)}{\pi(x)}+\alpha\dfrac{x}{|x|_1}.
\end{align}
This gradient exhibits at most linear growth, which arises from the linear factors inside the sum of the numerator in the first term. Additionally, it is non-smooth due to the non-differentiability of the Laplacian prior. The corresponding subgradient $\partial u(x)$ has linear growth while being semi-convex and strongly convex at infinity. In particular, the MoG term is semi-convex and strongly convex at infinity, and the inclusion of the convex $L^1$ prior preserves those properties, due to being convex. \cite{ocello} provide a rigorous verification of these properties within the context of mixture models (see their Assumption H1 and Appendix A.1), thereby confirming that the MoG with a Laplacian prior potential satisfies the assumptions of our framework. This model has indeed been studied in prior works on Langevin based algorithms, for example, \cite{lau2024nonlogconcave} consider it a representative non-logconcave, non-smooth target (see Sections 2.1 and 6.3), using it to evaluate ULA-type samplers under minimal assumptions.
\subsubsection{Sampling}\label{Mog_main}
We compare SGULA and MYULA on the task of sampling from a two dimensional mixture of Gaussians with a Laplacian prior, in order to assess their empirical behavior in a non-convex, non-smooth setting. For fixed scale parameter $a=0.15$, two mixtures configurations are considered with different number of components. The first comprises three components $(K=3)$, with weights $w=\{0.3,0.4,0.3\}$, mean vectors $\mu=\{(-2.6,2.8),(0,0),(2.2,-2.2)\}$ and isotropic variances $\sigma^2=\{0.60,0.80,0.70\}$. The second model contains five components $(K=5)$, with weights $w=\{0.18,0.22,0.20,0.22,\allowbreak 0.18\}$, mean vectors $\mu=\{(-3.0,2.8),(-1.2,0.8),(0.8,-0.4),(2.2,-2.0),(3.2,2.4)\}$ and isotropic variances $\sigma^2=\{0.55,0.65,0.50,0.70,0.60\}$. Both samplers were implemented with a fixed stepsize $\lambda=10^{-3}$ and inverse temperature parameter $\beta=1$, and MYULA employed the same value for its smoothing parameter $(\gamma=\lambda)$. For each method, we initialized 12 parallel chains from a broad uniform distribution on $$[\min_j \mu_j-2\max_j\sigma^2_j,\max_j \mu_j+2\max_j\sigma^2_j]^2,$$ thereby obtaining over dispersed initial states suitable for evaluating cross modal mixing. Each chain was run for $52\times10^3$ iterations, discarding the first $12\times10^3$ as burn-in and retaining the rest for the assessment. Figure \ref{MOG} compares the empirical densities obtained from pooled samples across all chains with the analytical ground truth densities. Each empirical density is estimated using a Gaussian kernel with Silverman’s bandwidth, providing a smooth visualization of the sampling behavior across modes.\\ 
Both SGULA and MYULA recover the main structural features of the target density. All modes are identified, and the overall allocation of probability mass across regions is consistent with the true density. SGULA produces contours that align more closely with the circular geometry of the Gaussian components, indicating reduced smoothing bias. MYULA, while accurately capturing the dominant regions, exhibits higher concentration around central modes, suggesting more limited mode exploration. Notably, these results demonstrate that SGULA remains effective even in this non-convex and non-smooth setting, highlighting its robustness when sampling from complex posteriors. Additional experiments illustrating the effect of stepsize and inverse temperature on SGULA's performance are provided in Appendix \ref{Mog_sims}.

\begin{figure}[ht]
  \centering
    \includegraphics[scale=0.6]{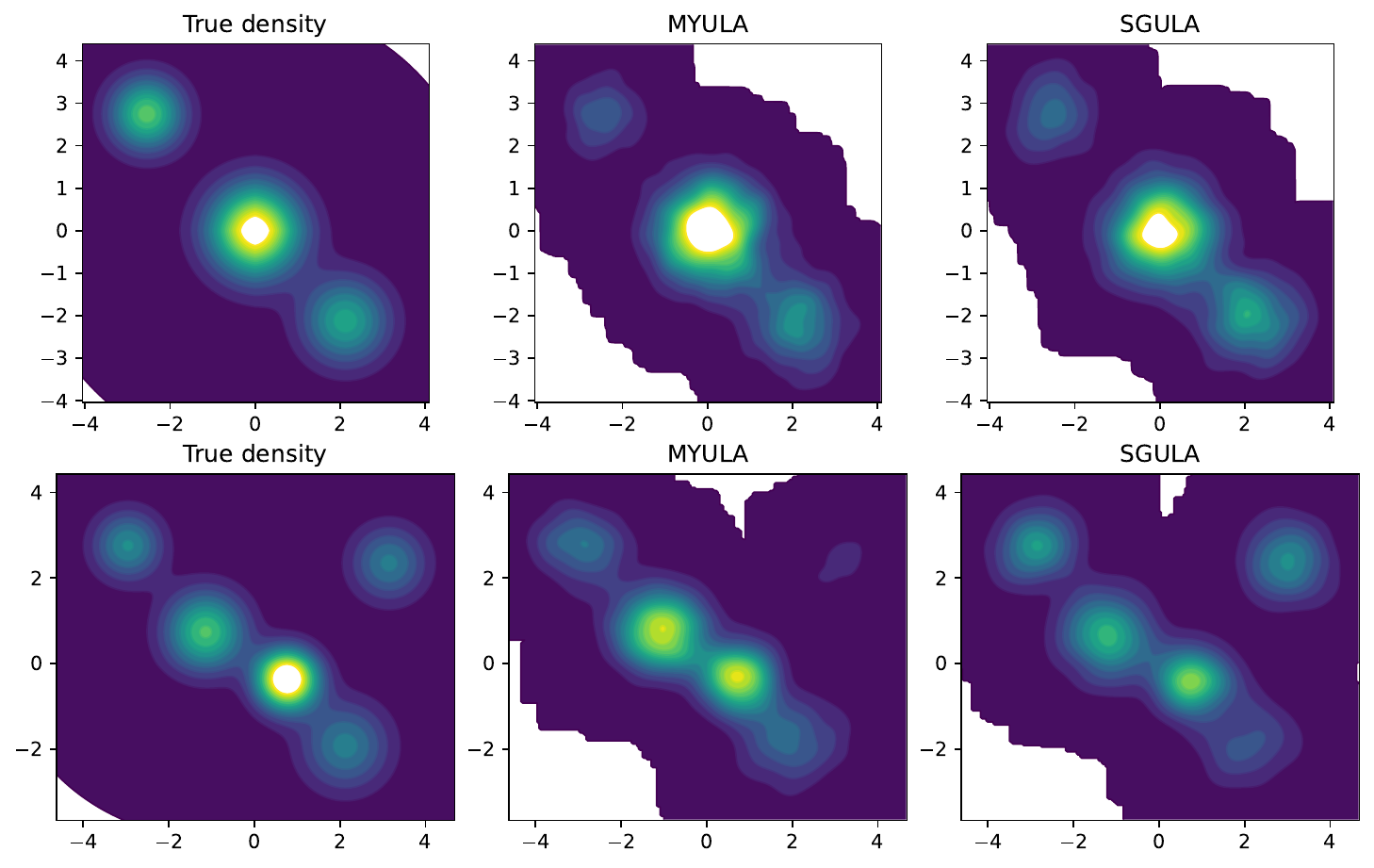}
  \caption{Comparison of SGULA and MYULA on a two dimensional mixture of Gaussians target with a Laplace prior. The top row corresponds to case $K=3$, and the bottom row to $K=5$.}
  \label{MOG}
\end{figure}
\subsection{One-dimensional example satisfying the assumptions}
Let $u:\mathbb{R} \to \mathbb{R}$ be a continuous function such that 
\[
u(x) = u_1(x) + u_2(x) + u_3(x), \quad \forall \, x\in\mathbb{R},
\]
where $u_1$ is a (continuous) strongly convex function (on $\mathbb{R}$) with $h_1:= \nabla u_1$, $u_2$ is a continuously differentiable function with a Lipschitz continuous derivative $h_2:= \nabla u_2$ and $u_3$ is a continuous function with a non-decreasing, discontinuous derivative $h_3:= \partial u_3$. Thus,  $\forall \, x,y\in\mathbb{R}$,
\begin{align*}
\exists \ \mu_1>0 \mbox{ such that }\langle h_1(x)-h_1(y) , x-y\rangle &\geq \mu_1|x-y|^2,  \\
\exists \ K_2>0 \mbox{ such that } |h_2(x)-h_2(y)| &\leq K_2|x-y|, \\
\mbox{and } \langle h_3(x)-h_3(y) , x-y\rangle &\geq 0.
\end{align*}
Note that in higher dimensions, the properties for $h_1$, $h_2$ and $h_3$ also yield the desired result provided that convexity at infinity is also achieved. Furthermore, one trivially concludes, $\forall \, x,y\in\mathbb{R}$
\[
\langle h(x)-h(y) , x-y\rangle \geq (\mu_1-K_2)|x-y|^2 \geq -K_2|x-y|^2.
\]
For a concrete example, we may use, $\forall x\in\mathbb{R}$
\begin{align*}
u_1(x)& = 2(x+3)^2-1/2,  \\
u_2(x)& = -8x^2\one_{\{0<x<2\}} -8x -32(x-1)\one_{\{x\ge2\}} , \\
u_3(x)& = 10 (x-1)^8\one_{\{1<x<2\}} +x+ 90(x-17/9)\one_{\{x\ge2\}}.
\end{align*}
Note that the subgradient of $u$ grows at most linearly and, in view of Remark \hyperlink{Remark2}{2}, it is strongly convex at infinity. Moreover, $K_2=16$ and $\mu_1=4$.
\subsection{Multidimensional example satisfying the assumptions}
We present an example of a non-convex potential that satisfies our assumptions.
Let \[u(x)=\max\{|x|,|x|^2\}-\frac{1}{2}|x|^2, \quad x\in\mathbb{R}^d.\]
It is easy to see that $u$ is semi-convex (therefore satisfies  Assumption \hyperlink{A4}{A4}) as $u+\frac{1}{2}|x|^2$ is convex since it is the maximum of two convex functions.
In addition, it is clear to see that each subgradient in this example is bounded inside the ball of radius 1, while outside the function is differentiable with $\nabla u(x)=x$ so it satisfies Assumption \hyperlink{A1}{A1}. The proof of Assumption \hyperlink{A2}{A2}  is more lengthy and is postponed to the Appendix, see Remark \hyperlink{Remark2}{2}.
\subsection{The SCAD Penalty}
A notable class of non-convex penalties frequently encountered in sparse recovery problems and high-dimensional statistics is the family of folded concave penalties. Among the most well-known is the \emph{Smoothly Clipped Absolute Deviation (SCAD)} penalty, originally introduced by \cite{fan2001variable} as a sparsity-inducing regularizer with unbiasedness properties. We show here that it satisfies our standing assumptions, thereby illustrating a semi-convex objective function that is
strongly convex at infinity.

\noindent Let $a>2$ and $\gamma>0$. A key component of the SCAD function is $q_{a,\gamma}:[0,\infty)\to\mathbb{R}$ which is given by
\begin{align*}
\dfrac{d}{dx} q_{a,\gamma}(x)=
\begin{cases}
\gamma, & \text{if } x \leq \gamma, \\
\dfrac{a\gamma - x}{a-1}, & \text{if } \gamma < x \leq a\gamma, \\
0, & \text{if } x > a\gamma.
\end{cases}
\end{align*}
Integrating and selecting constants to ensure continuity, we obtain the function $q_{a,\gamma}$,
\begin{align*}
q_{a,\gamma}(x)=
\begin{cases}
\gamma x, & \text{if } x \leq \gamma, \\
\dfrac{-x^2 + 2a\gamma x - \gamma^2}{2(a-1)}, & \text{if } \gamma < x \leq a\gamma, \\
\dfrac{(a+1)\gamma^2}{2}, & \text{if } x > a\gamma.
\end{cases}
\end{align*}
For any $x\in\mathbb{R}$, one extends the above function by defining $p_{a,\gamma}(x)=q_{a,\gamma}(|x|)$. The resulting function is continuous, symmetric, and non-convex but $1/(2(a-1))$-semi-convex. Its derivative is discontinuous at the origin, reflecting the model’s sparsity bias. Further, we define the regularized function
$p_{a,\gamma}^r(x) := p_{a,\gamma}(x) + \frac{1}{2(a-1)}x^2$, which is convex. Choosing a subdifferential version that accounts for the discontinuity at zero, one has
\begin{align*}
\partial p^r_{a,\gamma}(0)\in [-\gamma,\gamma] \quad \text{and} \quad \partial p^r_{a,\gamma}(x)=
\begin{cases}
\dfrac{\gamma x}{|x|} +\dfrac{x}{a-1}, & \text{if } 0<|x| \leq \gamma, \\
\dfrac{a\gamma x}{(a-1)|x|}, & \text{if } \gamma < |x| \leq a\gamma, \\
\dfrac{x}{a-1}, & \text{if } |x| > a\gamma.
\end{cases}
\end{align*}
A careful case-by-case comparison confirms the monotonicity property $\langle \partial p^r_{a,\gamma}(x)-\partial p^r_{a,\gamma}(y),x-y\rangle \geq 0$ for all $x,y\in\mathbb{R}$. In the multidimensional case, for $x\in\mathbb{R}^d$, we consider the separable extension
\begin{align}
P_{a,\gamma}(x):=\sum_{i=1}^d p_{a,\gamma}(x_i). \label{Scad2}
\end{align}
Then $P_{a,\gamma}$ is also $1/(2(a-1))$-semi-convex, since the regularized form
\[
P_{a,\gamma}^r(x):=P_{a,\gamma}(x)+\frac{1}{2(a-1)}|x|^2
\]
is convex by separability and convexity of each $p^r_{a,\gamma}$. Indeed, for all $x,y\in\mathbb{R}^d$ and $s\in[0,1]$,
\begin{align*}
P_{a,\gamma}^r(sx+(1-s)y)
&=\sum_{i=1}^d p_{a,\gamma}^r(sx_i+(1-s)y_i)
\leq \sum_{i=1}^d \left[s p^r_{a,\gamma}(x_i)+(1-s)p^r_{a,\gamma}(y_i)\right] \nonumber\\
&= s P^r_{a,\gamma}(x)+(1-s)P^r_{a,\gamma}(y).
\end{align*}
Moreover, the subgradient of $P_{a,\gamma}$ is a bounded function. Therefore, by Remark \hyperlink{Remark2}{2}, any objective function of the form $u(x)=v(x)+P_{a,\gamma}(x)$, where $v$ is strongly convex (for instance a quadratic), satisfies Assumptions \hyperlink{A2}{A2}–\hyperlink{A3}{A3}. This example highlights how non-convex but semi-convex structures, arising in high-dimensional regularization problems, fall within the scope of our framework. Such penalties are particularly relevant in sparse estimation, compressed
sensing, and machine learning applications where both model simplicity and robustness are sought.
\subsubsection{Robust Regression}
To compliment the theoretical analysis and illustrate the applicability of the Subgradient Unadjusted Langevin Algorithm (SGULA) to a practical optimization problem, we consider both a robust regression task with the non-convex SCAD regularization and, for comparison, the standard convex LASSO regularization. By evaluating SCAD and LASSO under an identical optimizer, we demonstrate that non-convex penalties can be handled within the same framework and the theoretical design translates into measurable performance gains.\\
In this experiment we generate 100 datasets according to the following procedure. Let $x\in\mathbb{R}^d$ with Toeplitz covariance $\Sigma_{ij}=\rho^{|i-j|}$ and $\rho=0.5$. For a fixed observations $n=60$ and dimension $d=8$, we sample $X\in R^{n\times d}$ from the standard Gaussian. The response follows the model $Y=X^{T}\beta^*+\epsilon$, where $\beta^*=(3,1.5,0,0,2,0,0,0)^T$ and the noise is drawn from a heavy tailed mixture, i.e. $\epsilon\sim0.9\mathcal{N}(0,1)+0.1\text{Cauchy}(0,1)$.\\
The objective is to minimize the penalized least squares for the SCAD and LASSO regularizations, which yields the potentials $U_S(\beta)=|y-X\beta|^2+P_{\alpha,\gamma}(\beta)$ and $U_L(\beta)=|y-X\beta|^2+\gamma|\beta|_1$. Here, $P_{\alpha,\gamma}(\beta)$ denotes the SCAD penalty, with fixed $a=3.7$, as suggested by \cite{fan2001variable}. The stepsize is fixed at $\lambda=10^{-3}$ and the tuning parameter $\gamma>0$, is chosen independently for both objectives via 5-fold Cross-Validation. Each chain is run for $7.5\times10^3$ iterations, while each CV-fold is truncated at $1.25\times10^3$ iterations.\\
Across $R=100$ Monte Carlo replications, corresponding to the generated dataset, we compute the model error $\text{ME}(\hat{\beta})=(\hat{\beta}-\beta^*)^T C(\hat{\beta}-\beta^*)$and the replication wise relative model error $\text{RME}=\text{ME}(\hat{\beta})/\text{ME}(\hat{\beta}_{\text{OLS}})$, reporting the median values, i.e. MRME. We also track the oracle for reference.\\ Figure \ref{SCAD} displays the MRME boxplots for SCAD, LASSO, and the oracle. Over 100 replications, we observe that SGULA combined with SCAD achieved a median relative model error of 34\% , compared to 63\%  for the LASSO and 29\% for the oracle. These results confirm that the non-convex SCAD penalty yields near oracle accuracy under the same subgradient unadjusted Langevin dynamics, providing empirical evidence that SGULA can perform effective excess-risk minimization on semi-convex and non-smooth problems.
\begin{figure}[ht]
  \centering
    \includegraphics[scale=0.6]{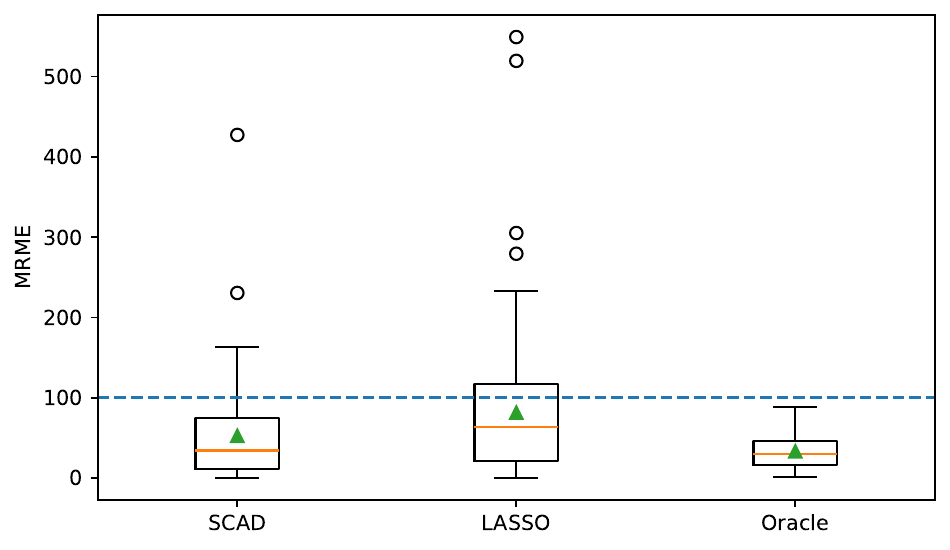}
  \caption{Distribution of median relative model errors (MRME \%) over 100 Monte Carlo replications for robust regression using SGULA}
  \label{SCAD}
\end{figure}

\section{Conclusion and discussion}
In this work, we have given non-asymptotic guarantees to sample from a target density where the potential is  non-convex and not smooth using an algorithm that is simple, computationally efficient, explicit and does not rely on smoothing techniques. Even though, our assumptions are quite relaxed compared to the current literature due to assuming only semi-logconcavity, we establish non-asymptotic guarantees in Wasserstein distances that are comparable to the current state of the art results available in the literature. In addition, we show that our algorithm can also perform well as an optimizer to solve associated (expected) excess-risk optimization problems.\\

\noindent We believe that our current work represents a step forward in bridging the gap in the literature regarding sampling from non-smooth and non-logconcave potentials. Interesting directions for future research include relaxing the assumptions even further and deriving estimates in stronger metrics, such as Rényi divergence, which are useful for differential privacy.
\appendix

\section*{Impact Statement}
This paper presents work aimed at advancing the field of machine learning in the direction of non-convex optimization and associated sampling problems in the presence of discontinuities. While there are many potential societal consequences of our work, we do not believe any require specific emphasis here.

\section*{References} \vspace{-2.0em} \renewcommand\refname{}


\newpage
\appendix
\section{Auxiliary Processes}
Consider the $\mathbb{R}^d$-valued overdamped Langevin SDE $(Z_t)_{t\in\mathbb{R}_{+}}$ given by
\begin{align}
    dZ_t=-h(Z_t)dt+\sqrt{2\beta^{-1}}dB_t,\ t\geq 0,\label{eq:001}
\end{align}
with $Z_0:=\theta_0$, where $h\in \partial U$ and $(B_t)_{t\geq0}$ is a standard $d$-dimensional Brownian motion. To avoid the issue of having set values SDEs, when we use continuous time arguments, we have the convention that at points where $u$ is not differentiable, $h$ is the subgradient with the minimum norm (we can always find it since the set of subgradients is compact and convex). We next introduce the auxiliary processes which are used in our analysis. For each $\lambda>0$, the time-scaled process $(Z_t^{\lambda})_{t\in\mathbb{R}_{+}}$ is defined by $Z_t^{\lambda}:=Z_{\lambda t}$, $t\in\mathbb{R}_{+}$. We note that
\begin{align}
    dZ_t^{\lambda}=-\lambda h(Z_t^{\lambda})dt+\sqrt{2\lambda \beta^{-1}}d{\tilde{B}}_t^{\lambda},\ Z_0^{\lambda}=\theta_0, \label{scaled}
\end{align}
where the Brownian motion $(\tilde{B}_t^{\lambda})_{t\geq 0}$ is defined as $\tilde{B}_t^{\lambda}:=B_{\lambda t}/\sqrt{\lambda},\ t\geq0$. The natural filtration of $(\tilde{B}_t^{\lambda})_{t\geq 0}$ is denoted by $(\mathcal{F}_t^{\lambda})_{t\geq 0}$ with $\mathcal{F}_t^{\lambda}:=\mathcal{F}_{\lambda t}, t\in\mathbb{R}_{+}.$ Then, we define $(\bar{\theta}_t^{\lambda})_{t\in\mathbb{R}_{+}}$, the continuous-time interpolation of SG-ULA \eqref{ULA}, as
\begin{align}
   d\bar{\theta}_t^{\lambda}=-\lambda h(\bar{\theta}_{\lfloor t \rfloor}^{\lambda})dt+\sqrt{2\lambda\beta^{-1}}d\tilde{B}_t^{\lambda},\ \bar{\theta}_0^{\lambda}=\theta_0.\label{Interpol}
\end{align}
The law of this process coincides with the law of the algorithm at grid points i.e. $\mathcal{L}(\bar{\theta}_n^{\lambda})=\mathcal{L}(\theta_{n}^{\lambda})$ for every $n\in\mathbb{N}$. Furthermore, consider a continuous-time process $(\zeta_t^{s,u,\lambda})_{t\geq s}$, which denotes the solution of the SDE
\begin{align}
    d\zeta_t^{s,u,\lambda}=-\lambda h(\zeta_t^{s,u,\lambda})dt+\sqrt{2\lambda\beta^{-1}}d\tilde{B}_t^{\lambda},\ \zeta_s^{s,u,\lambda}=u\in\mathbb{R}^d.\label{stopped}
\end{align}
\begin{definition}\label{def1} Fix $n\in\mathbb{N}$. For any $t\geq nT$, define $\bar{\zeta}_t^{\lambda,n}:=\zeta_t^{nT,\bar{\theta}_{nT}^{\lambda},\lambda}$, where $T:=\lfloor 1/\lambda\rfloor.$
\end{definition}
\noindent One notices that the process $(\bar{\zeta}_t^{\lambda,n})_{t\geq nT}$ has the same law as the time-scaled Langevin SDE \eqref{scaled}, started at time $nT$ with initial condition $\bar{\theta}_{nT}^{\lambda}.$
\section{Existence and uniqueness of solution to the SDE and the invariant measure}
\noindent Consider the infinitesimal generator $\mathcal{L}$ associated with \eqref{eq:001} defined for all $\phi\in C^2(\mathbb{R}^d)$ and $x\in\mathbb{R}^d$ by \[\mathcal{L}\phi(x)=-\langle h(x),\nabla\phi(x)\rangle +\beta^{-1}\Delta\phi(x).\] Next define the Lyapunov function $V(x)=1+|x|^2$ for all $x\in\mathbb{R}^d$. Note that $V$ is twice continuously differentiable, and under Assumption \hyperlink{A1}{A1}, one gets the following growth condition
\begin{align}
    \mathcal{L}V(x)\leq C_* V(x),\ \forall x\in\mathbb{R}^d,\label{eq:003}
\end{align}
where $C_*=\max\{4L,m^2/L\}+m^2/2L+2d/\beta$.  Moreover under both Assumptions \hyperlink{A1}{A1} and \hyperlink{A2}{A2}, it satisfies the geometric drift condition \begin{align}
    \mathcal{L}V(x)\leq -\mu V(x)+\mu+2b+2d/\beta,\ \forall x\in\mathbb{R}^d.\label{eq:004}
\end{align} It follows that
\begin{align}
    \lim_{|x|\to\infty}V(x)=+\infty,\ \lim_{|x|\to\infty}\mathcal{L}V(x)=-\infty\label{Lyapunov}.
\end{align}
\begin{proposition}\label{prop1}
Let Assumptions \hyperlink{A1}{A1}-\hyperlink{A4}{A4} hold. The SDE \eqref{eq:001} has a unique strong solution.
\end{proposition}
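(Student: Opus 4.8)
The plan is to establish \emph{weak existence} and \emph{pathwise uniqueness} separately and then to invoke the Yamada--Watanabe theorem to conclude that \eqref{eq:001} admits a unique strong solution. The two structural inputs are: the at most linear growth of every subgradient (Assumption \hyperlink{A1}{A1}) together with the drift estimate \eqref{eq:003}, which rules out explosion and gives uniform moment control; and the one-sided Lipschitz property of the subgradients coming from semi-convexity (Assumption \hyperlink{A4}{A4} and its Corollary), which is precisely the monotonicity-type bound needed for uniqueness of an SDE with additive, non-degenerate noise. The non-trivial point is that the drift $h$ is only a measurable (minimal-norm) selection of the subdifferential, so classical Lipschitz theory does not apply.

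For weak existence I would argue by approximation. Since $\psi:=u+\frac{K}{2}|\cdot|^2$ is convex, $\partial\psi$ is maximal monotone and has globally Lipschitz Moreau--Yosida regularizations; writing $h=\nabla\psi-K\,\mathrm{Id}$ in the a.e.\ single-valued sense, this produces Lipschitz vector fields $h_n$ with a uniform linear growth bound and $h_n(x)\to h(x)$ for Lebesgue-a.e.\ $x$ (using that the gradient of a semi-convex function is continuous on the full-measure set where it is single-valued). Each SDE $dZ^n_t=-h_n(Z^n_t)\,dt+\sqrt{2\beta^{-1}}\,dB_t$, $Z^n_0=\theta_0$, has a unique strong solution, and the computation behind \eqref{eq:003}, combined with Assumption \hyperlink{A3}{A3}, yields $\sup_n\E\big[\sup_{t\le T}|Z^n_t|^2\big]<\infty$ for every $T$. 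This gives tightness of $(Z^n)$ in $C([0,T];\mathbb{R}^d)$ (Kolmogorov--Chentsov, using the linear growth of the drift and Gaussian moments for the stochastic term), so along a subsequence $(Z^n,B)$ converges in law; by Skorokhod representation we may pass to a common space with almost sure convergence. The delicate step is the limit in the drift: because the diffusion coefficient is constant and non-degenerate, the occupation measures of the $Z^n$ are uniformly absolutely continuous with respect to Lebesgue measure (a Krylov-type estimate), and since $h_n\to h$ outside a Lebesgue-null set that is not charged in the limit, one obtains $\int_0^t h_n(Z^n_s)\,ds\to\int_0^t h(Z_s)\,ds$, so the limit solves \eqref{eq:001} weakly; non-explosion is automatic from the moment bound.

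Pathwise uniqueness is the easier half and is insensitive to the lack of smoothness. If $Z,Z'$ are two solutions on the same stochastic basis driven by the same Brownian motion with $Z_0=Z'_0$, the additive noise cancels in the difference, so $t\mapsto |Z_t-Z'_t|^2$ is absolutely continuous and
\[
|Z_t-Z'_t|^2=-2\int_0^t\langle h(Z_s)-h(Z'_s),\,Z_s-Z'_s\rangle\,ds\le 2K\int_0^t|Z_s-Z'_s|^2\,ds,
\]
where the inequality is the one-sided bound $\langle h(x)-h(y),x-y\rangle\ge -K|x-y|^2$ from the Corollary (valid for \emph{any} subgradient selections). Grönwall's inequality then forces $Z_t=Z'_t$ for all $t$ almost surely. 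Together with weak existence, Yamada--Watanabe gives strong existence and uniqueness.

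The main obstacle is the weak-existence step, precisely the justification of $\int_0^t h_n(Z^n_s)\,ds\to\int_0^t h(Z_s)\,ds$ for a discontinuous drift, which is what the Krylov occupation-time estimate is needed for. An alternative that bypasses the approximation is to read \eqref{eq:001} as the multivalued SDE $dZ_t\in-\partial\psi(Z_t)\,dt+KZ_t\,dt+\sqrt{2\beta^{-1}}\,dB_t$ driven by the maximal monotone operator $\partial\psi$ plus the Lipschitz perturbation $KZ_t$, for which strong well-posedness with additive noise is classical; in this formulation the choice of the minimal-norm selection is immaterial, since non-degeneracy of the noise forces $Z$ to spend zero Lebesgue-time in the Lebesgue-null set where $u$ fails to be differentiable.
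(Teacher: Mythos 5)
Your proposal is mathematically sound and arrives at the right conclusion, but it takes a much more elaborate route than the paper: the paper's proof is essentially a one-line citation, observing that the constant, non-degenerate diffusion together with the monotonicity property from Assumption \hyperlink{A4}{A4} (equivalently the Corollary) gives pathwise uniqueness, and that the growth and local integrability conditions coming from Assumption \hyperlink{A1}{A1} and \eqref{eq:003} place the SDE squarely within the hypotheses of Theorem 2.8 of Gy\"ongy--Krylov \cite{Krylov}, which already delivers a unique strong solution. What you have written is, in effect, a sketch of the ideas \emph{inside} that theorem: approximating the measurable drift by Lipschitz fields (you use Moreau--Yosida regularization of $\psi=u+\tfrac{K}{2}|\cdot|^2$, which is a clean choice here since it also preserves the linear-growth bound and converges pointwise to the minimal-norm selection), obtaining tightness from the moment/non-explosion estimate, passing to the limit in the drift integral via a Krylov occupation-time bound, and then closing with Yamada--Watanabe. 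Your pathwise-uniqueness step via the one-sided Lipschitz inequality and Gr\"onwall is exactly the mechanism the paper alludes to. The trade-off is self-containedness versus brevity: your version does not lean on an external black box, but it also leaves the Krylov estimate and the a.e.\ convergence of $h_n(Z^n_s)$ as sketches that would need to be filled in carefully (in particular, justifying why the exceptional Lebesgue-null set is not charged requires the non-degeneracy of the noise, which you correctly flag). Your closing remark recasting \eqref{eq:001} as a multivalued SDE driven by the maximal monotone operator $\partial\psi$ plus a Lipschitz perturbation is a legitimate alternative route (well-posedness of such SDEs with additive noise is classical), and has the advantage of making the choice of selection immaterial; the paper does not take this route but it would also work.
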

\begin{proof}
  Uniqueness is guaranteed under the monotonicity condition \hyperlink{A4}{A4} and due to the diffusion coefficient being constant. Moreover, all conditions of Theorem 2.8 in \cite{Krylov} are satisfied under our assumptions; therefore, the SDE \eqref{eq:001} admits a unique strong solution. In particular, since the drift coefficient is subject to the growth Assumption \hyperlink{A1}{A1} and the diffusion coefficient is constant, in view of \eqref{eq:003}, they trivially satisfy the conditions (i), (ii) and (iv). Condition (iii) is also satisfied trivially as in our case the domain is $D=\mathbb{R}^d$.
\end{proof}
\begin{proposition}\label{prop-inv-ex}
    Let Assumptions \hyperlink{A1}{A1}, \hyperlink{A2}{A2} and \hyperlink{A4}{A4} hold, the Langevin SDE \eqref{eq:001} admits a unique invariant measure.
\end{proposition}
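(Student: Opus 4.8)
The strategy is to realise $\pi_\beta$ as the unique fixed point of the time-$t_0$ transition operator on the complete metric space $(\mathcal{P}_1(\mathbb{R}^d),W_1)$, by combining the Lyapunov estimate already recorded in \eqref{eq:004} with a Wasserstein contraction of the semigroup. Write $(P_t)_{t\ge0}$ for the Markov semigroup of \eqref{eq:001}; it is well defined because Proposition \ref{prop1} provides a unique, non-exploding strong solution under Assumptions \hyperlink{A1}{A1}--\hyperlink{A4}{A4}. Applying It\^o's formula to $V(x)=1+|x|^2$ along the solution and inserting \eqref{eq:004}, a Gr\"onwall argument gives, for every deterministic initial point $x$,
\[
\mathbb{E}\,V(Z_t^x)\le e^{-\mu t}V(x)+\frac{\mu+2b+2d/\beta}{\mu},\qquad t\ge0.
\]
Integrating this against an arbitrary $\mu_0\in\mathcal{P}_1(\mathbb{R}^d)$ shows $\mu_0P_t\in\mathcal{P}_2(\mathbb{R}^d)\subset\mathcal{P}_1(\mathbb{R}^d)$, so each $P_t$ maps $\mathcal{P}_1(\mathbb{R}^d)$ into itself; the same inequality together with Fatou's lemma applied to $V\wedge n$ forces any invariant measure to have finite second moment.

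The key ingredient is the $W_1$-contraction of the semigroup: there exist $c>0$ and $\kappa\ge1$, depending only on $\mu$, $K$, $R$ and $\beta$, such that
\[
W_1(\mu_0P_t,\nu_0P_t)\le\kappa\,e^{-ct}\,W_1(\mu_0,\nu_0),\qquad \mu_0,\nu_0\in\mathcal{P}_1(\mathbb{R}^d),\ t\ge0.
\]
This is precisely the contraction estimate of Proposition \ref{prop2}, obtained by coupling two copies of \eqref{eq:001}: Assumption \hyperlink{A2}{A2} makes the coupled distance contract while the two copies are separated by at least $R$, and the one-sided Lipschitz bound of Assumption \hyperlink{A4}{A4} is what renders the coupled system well posed despite the discontinuous, non-monotone drift and closes the estimate on the compact region where convexity may fail (the natural object there being a concave function of the distance, bounded above and below by constant multiples of the distance, whence the factor $\kappa$).

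Now choose $t_0>0$ with $\kappa e^{-ct_0}<1$. By the two steps above, $P_{t_0}$ is a strict contraction of the complete metric space $(\mathcal{P}_1(\mathbb{R}^d),W_1)$ and therefore has a unique fixed point $\pi_\beta$. For any $s\ge0$, the measure $\pi_\beta P_s$ lies in $\mathcal{P}_1(\mathbb{R}^d)$ and, since $P_sP_{t_0}=P_{t_0}P_s$, is again fixed by $P_{t_0}$; uniqueness of the fixed point gives $\pi_\beta P_s=\pi_\beta$, so $\pi_\beta$ is invariant for the whole semigroup. Conversely, any invariant measure has finite first moment (by the moment bound) and is fixed by $P_{t_0}$, hence coincides with $\pi_\beta$. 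This yields both existence and uniqueness.

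\emph{Main obstacle.} Everything apart from the contraction step is the standard Lyapunov-plus-contraction template. The genuinely delicate point is the $W_1$-contraction: because $h$ is neither continuous nor monotone, the usual reflection-coupling construction does not apply off the shelf, and one must use Assumption \hyperlink{A4}{A4} both to guarantee well-posedness of the coupled SDE and to control the distance process on the non-convex core — this is exactly the work carried out in Proposition \ref{prop2}, which we invoke here.
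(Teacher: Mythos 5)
Your proof is correct, but it follows a genuinely different route from the paper. The paper establishes existence by citing Theorem 2.2 of Bogachev--R\"ockner--Stannat, which requires only the Lyapunov condition \eqref{Lyapunov} and local integrability of the drift (so only Assumptions \hyperlink{A1}{A1} and \hyperlink{A2}{A2}); uniqueness is then deduced separately from the contraction of Proposition \ref{prop2} (or \ref{prop3}) by taking $\mathcal{L}(Z_0)=\pi_\beta$. You instead fold both existence and uniqueness into a single Banach fixed-point argument on $(\mathcal{P}_1(\mathbb{R}^d),W_1)$, using the Lyapunov bound only to keep the semigroup inside $\mathcal{P}_1$ and to show invariant measures have finite moments. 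Your version is more self-contained (it avoids the external analytic result and reuses only what the paper already proves), at the cost of needing \hyperlink{A4}{A4} even for existence, whereas the paper's decomposition isolates exactly which assumptions are responsible for which conclusion. Two small points of care: for a general $\mu_0\in\mathcal{P}_1(\mathbb{R}^d)\setminus\mathcal{P}_2(\mathbb{R}^d)$ the second-moment Lyapunov estimate only gives $\mu_0P_t\in\mathcal{P}_1(\mathbb{R}^d)$, not $\mathcal{P}_2(\mathbb{R}^d)$ as you wrote (use $\mathbb{E}|Z_t^x|\le\sqrt{\mathbb{E}|Z_t^x|^2}\lesssim 1+|x|$ and integrate), and Proposition \ref{prop2} is stated under Assumption \hyperlink{A3}{A3} on the initial laws, so to run Banach's theorem on all of $\mathcal{P}_1$ you are implicitly invoking the fact that the reflection-coupling estimate of \cite{Eberle1} extends to any pair of initial laws with finite $W_1$ distance --- worth flagging, though unproblematic.
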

\begin{proof}
  The existence of an invariant measure is established under Assumptions \hyperlink{A1}{A1} and \hyperlink{A2}{A2}. In particular, the Langevin SDE \eqref{eq:001} has a constant diffusion coefficient and Assumption \hyperlink{A1}{A1} ensures that the drift coefficient is locally integrable. Consequently, in view of \eqref{Lyapunov}, all the conditions of Theorem 2.2 in \cite{bogachev2000uniqueness} are satisfied, the existence of at least one invariant measure follows. Moreover, with the inclusion of Assumption \hyperlink{A4}{A4}, the contraction results in Appendix \ref{Section_estimates} imply the uniqueness of the invariant measure. This is a direct consequence of either Proposition \ref{prop2} or Proposition \ref{prop3}, by setting the initial condition $Z_0$ in \eqref{eq:001} to be such that $\mathcal{L}(Z_0)=\mathcal{L}(\pi_{\beta}).$
\end{proof}
\begin{proposition}\label{prop-inv-id}
     Let Assumptions \hyperlink{A1}{A1}, \hyperlink{A2}{A2} and \hyperlink{A4}{A4} hold. The invariant measure $\pi_\beta$ of the SDE \eqref{eq:001}, is characterized by the density $Z^{-1}\exp(-\beta u(x))$, with $Z$ being the normalization constant.
\end{proposition}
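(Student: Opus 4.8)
The plan is to show that $\pi_\beta$ solves, in the weak (distributional) sense, the stationary Fokker--Planck equation associated with the generator $\mathcal{L}$ of \eqref{eq:001}, i.e.\ that $\int_{\mathbb{R}^d}\mathcal{L}\phi\,d\pi_\beta=0$ for every test function $\phi\in C_c^\infty(\mathbb{R}^d)$, and then to upgrade this to genuine invariance and invoke the uniqueness already established in Proposition~\ref{prop-inv-ex} to conclude that the unique invariant measure must be $\pi_\beta$. Equivalently, since the martingale problem for $\mathcal{L}$ is well posed (Proposition~\ref{prop1}) and the Lyapunov inequality \eqref{eq:004} is in force, the stationary Fokker--Planck equation admits a \emph{unique} probability solution (again via \cite{bogachev2000uniqueness}), so it suffices to exhibit one, namely $\pi_\beta$.

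The first step is a regularity observation: by Assumption~\hyperlink{A4}{A4} and the adapted Proposition~2.1 of \cite{alberti1992singularities}, $u$ is locally Lipschitz, hence so is $e^{-u}$, which therefore lies in $W^{1,\infty}_{\mathrm{loc}}(\mathbb{R}^d)$ with $\nabla\!\big(e^{-u}\big)=-\,h\,e^{-u}$ Lebesgue-almost everywhere; here it is crucial that $\pi_\beta$ is absolutely continuous with respect to Lebesgue measure, so only the a.e.\ identity $\nabla u=h$ matters and the particular measurable selection of $\partial u$ is irrelevant. Combining Assumptions~\hyperlink{A1}{A1}--\hyperlink{A2}{A2} with Remark~\ref{remark1}, one gets $u(x)\gtrsim \tfrac{\mu}{4}|x|^2$ for $|x|$ large, so $Z=\int_{\mathbb{R}^d}e^{-u}<\infty$ and $|h|\,e^{-u}\in L^1(\mathbb{R}^d)$; all integrals below are thus finite. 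The second step is the computation itself: for $\phi\in C_c^\infty(\mathbb{R}^d)$, integrating by parts the second-order term and using Step~one,
\[
\int_{\mathbb{R}^d}\Delta\phi\,e^{-u}\,dx=-\int_{\mathbb{R}^d}\langle\nabla\phi,\nabla e^{-u}\rangle\,dx=\int_{\mathbb{R}^d}\langle\nabla\phi,h\rangle\,e^{-u}\,dx,
\]
which exactly cancels the first-order contribution $-\int_{\mathbb{R}^d}\langle h,\nabla\phi\rangle e^{-u}\,dx$, so that $\int_{\mathbb{R}^d}\mathcal{L}\phi\,d\pi_\beta=0$. Finally, passing from this infinitesimal invariance to $\int P_t\phi\,d\pi_\beta=\int\phi\,d\pi_\beta$ is carried out by applying Dynkin's formula to $\phi(Z_t)$, integrating the initial condition over $\pi_\beta$, and using a localisation/dominated-convergence argument justified by \eqref{eq:004}; the uniqueness statement of Proposition~\ref{prop-inv-ex} then identifies this measure with the invariant one.

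I expect the main obstacle to be making Steps one and two fully rigorous in the non-smooth regime: the chain rule $\nabla e^{-u}=-h\,e^{-u}$ and the attendant integration by parts must be justified when $u$ is only locally Lipschitz and $h$ is a measurable selection of $\partial u$ rather than a genuine gradient. This is precisely where semi-convexity pays off, since it delivers exactly the local Lipschitz regularity needed for the $W^{1,\infty}_{\mathrm{loc}}$ chain rule, while global integrability of $|h|\,e^{-u}$ and of $e^{-u}$ is supplied by the dissipativity of Remark~\ref{remark1}. An alternative, and arguably cleaner, route is by mollification: approximate $u$ by smooth $u_\varepsilon=u\ast\rho_\varepsilon$, for which the classical Fokker--Planck identity gives that $\pi_\beta^\varepsilon\propto e^{-u_\varepsilon}$ is invariant for the smoothed SDE, and then pass to the limit as $\varepsilon\to0$ using the uniform moment/dissipativity bounds and the a.e.\ convergence $\nabla u_\varepsilon\to h$ to transfer invariance to $\pi_\beta$.
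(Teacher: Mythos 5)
Your argument is essentially correct and self-contained, but it follows a genuinely different route from the paper. The paper's proof is a one-line citation: from Assumption \hyperlink{A1}{A1} it notes $u\in H^1_{\mathrm{loc}}$ and then simply invokes Theorem 3 of \cite{Habring}, which already establishes that $\pi_\beta\propto e^{-u}$ is the invariant density for non-smooth potentials of this class. You instead reconstruct the result from scratch: you verify the stationary Fokker--Planck identity $\int \mathcal{L}\phi\,d\pi_\beta=0$ directly, by exploiting the local Lipschitz regularity that Assumption \hyperlink{A4}{A4} provides (via \cite{alberti1992singularities}) to justify the chain rule $\nabla(e^{-u})=-h\,e^{-u}$ a.e.\ and the integration by parts, with integrability supplied by the dissipativity of Remark~\ref{remark1}, and then upgrade infinitesimal to genuine invariance via Dynkin's formula together with the uniqueness from Proposition~\ref{prop-inv-ex} (or equivalently via the Bogachev--R\"ockner--Stannat uniqueness of stationary Fokker--Planck solutions under the Lyapunov bound \eqref{eq:004}). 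The paper's citation route is shorter and offloads the technical burden of the non-smooth chain rule, the integration by parts, and the passage from infinitesimal to genuine invariance to \cite{Habring}; your route is more transparent, makes explicit exactly which assumptions are used where, and also correctly flags that the choice of measurable selection of the subgradient is immaterial because both the target density and the transition kernel are absolutely continuous with respect to Lebesgue measure. Your proposed mollification alternative is likewise a valid variant. One small bookkeeping remark: with diffusion coefficient $\sqrt{2\beta^{-1}}$ the invariant density is really $\propto e^{-\beta u}$; the paper's generator and Proposition~\ref{prop-inv-id} both suppress the $\beta$, so your computation is consistent with the paper's conventions, but this is worth keeping in mind if you carry $\beta$ explicitly.
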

\begin{proof}
   Under Assumption \hyperlink{A1}{A1} one yields that $u\in \mathbb{H}_{\text{loc}}^1$ and the rest follow from Theorem 3 in \cite{Habring}.
\end{proof}
\begin{remark}
    Since the dissipativity condition is still preserved when one replaces Assumption \hyperlink{A2}{A2} with \hyperlink{A5}{A5}, Propositions \ref{prop1},\ref{prop-inv-ex}, \ref{prop-inv-ex}  still hold under Assumptions \hyperlink{A1}{A1}, \hyperlink{A3}{A3}, \hyperlink{A4}{A4}, \hyperlink{A5}{A5}.
\end{remark}
\section{Preliminary Estimates}\label{Section_estimates}
\begin{lemma}\label{Lemma1}
    Let Assumptions \hyperlink{A1}{A1},\hyperlink{A3}{A3} and \hyperlink{A2}{A2} or \hyperlink{A5}{A5} hold. Then one has \begin{align}
        \sup_{t\geq0}\mathbb{E}|Z_t|^2\leq C_1\left(1+\mathbb{E}|\theta_0|^2\right),\label{eqProp1}
    \end{align}
    where $C_1=(4/\mu)(b+d/\beta).$
    
\begin{proof}
    Let $\tau_R=\inf\{t\geq0: |Z_t|\geq R\}$. Then by applying It\^o's formula to $(t,x)\to e^{\mu t/2}|x|^2$, one obtains 
    \begin{align*}
        e^{\mu (t\land\tau_R)/2}|Z_{t\land\tau_R}|^2&=|\theta_0|^2+\int_0^{t\land\tau_R}{\dfrac{\mu}{2}e^{\mu s/2}|Z_s|^2-2e^{\mu s/2}\langle Z_s,h(Z_s)\rangle +\dfrac{2d}{\beta}e^{\mu s/2}}ds\\
        &+\int_0^{t\land\tau_R}\sqrt{8\beta^{-1}}e^{\mu s/2}h(Z_s)dB_s.
    \end{align*}
    Due to the boundedness of $h$ under Assumption \hyperlink{A1}{A1}, the last term is a martingale, thus vanishing under expectation. Hence by taking the expectation on both sides and using \eqref{eqR1}, we bound the LHS as follows
    \begin{align*}
        \mathbb{E}\left[e^{\mu (t\land\tau_R)/2}|Z_{t\land\tau_R}|^2\right]&\leq \mathbb{E}|\theta_0|^2+\dfrac{4}{\mu}(b+d/\beta)e^{\mu (t\land\tau_R)/2}-\dfrac{\mu}{2}\int_0^{t\land\tau_R}e^{\mu s/2}\mathbb{E}|Z_s|^2ds\\
        &\leq \mathbb{E}|\theta_0|^2+\dfrac{4}{\mu}(b+d/\beta)e^{\mu t/2}.
    \end{align*}
    Note furthermore that since $Z_t$ has almost surely continuous trajectories one has $\sup_{s\in[0,t]}|Z_s|<\infty$ (a.s), so by Fatou's Lemma
    \begin{align*}
        e^{\mu t/2}\mathbb{E}\left[|Z_t|^2\right]&=\mathbb{E}\left[\liminf_{R\to\infty}e^{\mu (t\land\tau_R)/2}|Z_{t\land\tau_R}|^2\right]\leq \liminf_{R\to\infty}\mathbb{E}\left[e^{\mu(t\land\tau_R)/2}|Z_{t\land\tau_R}|^2\right]\\
        &\leq\mathbb{E}|\theta_0|^2+\dfrac{4}{\mu}(b+d/\beta)e^{\mu t/2}.
    \end{align*}
    Hence by multiplying both sides by $e^{-\mu t/2}$, we yield the desired result
    \begin{align*}
        \mathbb{E}\left[|Z_t|^2\right]\leq \mathbb{E}|\theta_0|^2+\dfrac{4}{\mu}(b+d/\beta).
    \end{align*}
\end{proof}
\end{lemma}
\begin{lemma}\label{Lemma2}
    Let Assumptions \hyperlink{A1}{A1}-\hyperlink{A3}{A3} hold and $\lambda_0\in(0,\mu/(2L^2))$. Then there exists $C_2>0$ such that for every $\lambda\in(0,\lambda_0)$ one has \begin{align}
        \sup_{t\geq0}\mathbb{E}|\bar{\theta}_t^{\lambda}|^2\leq C_3\left(1+\mathbb{E}|\theta_0|^2\right),\label{eqProp2}
    \end{align}
    where $C_3=\left(2\mu^2/L^2+2\right)C_2+ (2\mu/L^2)\left(\mu m^2/L^2+2d/\beta\right)$ and $C_2=(2b+2d/\beta+\mu m^2/L^2)/(\mu-2\lambda L^2).$
\begin{proof}
We begin by considering the SG-ULA iterates ${(\theta_n^{\lambda})}_{n\geq0}$ \eqref{ULA} corresponding to interpolation scheme \eqref{Interpol}.
\begin{align*}
    |\theta_{n+1}^{\lambda}|^2=\left|\theta_n^{\lambda}-\lambda h(\theta_n^{\lambda})\right|^2+\dfrac{2\lambda}{\beta}|\xi_{n+1}|^2+2\langle \theta_n^{\lambda}-\lambda h(\theta_n^{\lambda}),\xi_{n+1}\rangle.
\end{align*}
Since $\theta_n^{\lambda}$ is independent of $\xi_{n+1}$, the last term vanishes under expectation. Thus by taking the conditional expectation $\mathbb{E}^{\theta_n^{\lambda}}\left[\cdot\right]$, on both sides and using \eqref{eqA1}, \eqref{eqR1}, we obtain
\begin{align*}
    \mathbb{E}^{\theta_n^{\lambda}}\left[|\theta_{n+1}^{\lambda}|^2\right]&=\mathbb{E}^{\theta_n^{\lambda}}\left[|\theta_n^{\lambda}|^2\right]-2\lambda\mathbb{E}^{\theta_n^{\lambda}}\left[\langle \theta_n,h(\theta_n^{\lambda})\rangle\right]+\lambda^2\mathbb{E}^{\theta_n^{\lambda}}\left[|h(\theta_n^{\lambda})|^2\right]+2\lambda d/\beta\\
    &\leq |\theta_n^{\lambda}|^2-\lambda\mu|\theta_n^{\lambda}|^2+2\lambda^2L^2|\theta_n^{\lambda}|^2+2\lambda b+2\lambda^2m^2+2\lambda d/\beta\\
    &\leq\left(1-\lambda\mu+2\lambda^2L^2\right)|\theta_n^{\lambda}|^2+\lambda\left(2b+2\mu m^2/(2L^2)+2d/\beta\right).
\end{align*}
Now by taking the expectation on both sides, we can iterate the above bound, due to the restriction $\lambda<\mu/(2L^2)$, to get
\begin{align}
    \mathbb{E}\left[|\theta_{n+1}^{\lambda}|^2\right]&\leq \left(1-\left(\lambda\mu-2\lambda^2L^2\right)\right)^n\mathbb{E}|\theta_0|^2\nonumber\\&+\dfrac{1-\left(1-(\lambda\mu-2\lambda^2L^2\right)^n}{\lambda\left(\mu-2\lambda L^2\right)}\lambda\left(2b+2d/\beta+\mu m^2/L^2\right)\nonumber\\
    &\leq C_2\left(1+\mathbb{E}|\theta_0|^2\right).\label{ULA_bound}
\end{align}
For the interpolated scheme, by H\"older's inequality and the linear growth condition \eqref{eqA1} one writes
\begin{align*}
    |\bar{\theta}_t^{\lambda}|^2&=2|\bar{\theta}_t^{\lambda}-\bar{\theta}_{\lfloor t\rfloor}^{\lambda}|^2+2|\bar{\theta}_{\lfloor t\rfloor}^{\lambda}|^2\leq 4\left|\int_{\lfloor t \rfloor}^t \lambda h(\bar{\theta}_{\lfloor s\rfloor}^{\lambda})ds\right|^2+\dfrac{8\lambda}{\beta}|d\tilde{B}_t^{\lambda}-d\tilde{B}_{\lfloor t\rfloor}^{\lambda}|^2+2|\bar{\theta}_{\lfloor t\rfloor}^{\lambda}|^2\\
    &\leq 4\lambda^2(t-\lfloor t\rfloor)\int_{\lfloor t\rfloor}^t|h(\bar{\theta}_{\lfloor s\rfloor}^{\lambda})|^2ds+\dfrac{8\lambda}{\beta}|d\tilde{B}_t-d\tilde{B}_{\lfloor t\rfloor}|^2+2|\bar{\theta}_{\lfloor t\rfloor}^{\lambda}|^2\\
    &\leq 4\lambda^2\int_{\lfloor t\rfloor}^t (2m^2+2L^2|\bar{\theta}_{\lfloor s\rfloor}^{\lambda}|^2)ds+\dfrac{8\lambda}{\beta}|d\tilde{B}_t-d\tilde{B}_{\lfloor t\rfloor}|^2+2|\bar{\theta}_{\lfloor t\rfloor}^{\lambda}|^2.
\end{align*}
Notice that for any $s\in\left[\lfloor t\rfloor,t\right]$, we have $\lfloor s\rfloor=\lfloor t \rfloor$, thus by taking the expectation we obtain
\begin{align*}
     \mathbb{E}|\bar{\theta}_t^{\lambda}|^2&\leq 8\lambda^2m^2+\dfrac{8\lambda d}{\beta}+\left(8\lambda^2L^2+2\right)\mathbb{E}|\bar{\theta}_{\lfloor t \rfloor}^{\lambda}|^2\leq \dfrac{2\mu}{L^2}\left(\dfrac{\mu m^2}{L^2}+\dfrac{2d}{\beta}\right)+\left(\dfrac{2\mu^2}{L^2}+2\right)\mathbb{E}|\bar{\theta}_{\lfloor t\rfloor}^{\lambda}|^2.
\end{align*}
Moreover, by construction the interpolation scheme \eqref{Interpol} agrees with the SG-ULA iterates \eqref{ULA} on grid points. That is $\bar{\theta}_{\lfloor t \rfloor}^{\lambda}=\theta_{n}^{\lambda}$ for $t\in[n,n+1)$, thus by using the bound established in \eqref{ULA_bound}, we yield
\begin{align*}
    \mathbb{E}|\bar{\theta}_t^{\lambda}|^2\leq C_3\left(1+\mathbb{E}|\theta_0|^2\right).
\end{align*}
\end{proof}
\end{lemma}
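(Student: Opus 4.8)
The plan is to prove the bound in two stages: first a uniform-in-$n$, uniform-in-$\lambda$ second moment bound for the discrete SG-ULA iterates $(\theta_n^{\lambda})_{n\ge0}$, and then a transfer of this bound to the continuous-time interpolation $\bar{\theta}_t^{\lambda}$ over each interval $[\lfloor t\rfloor,t]$ on which the drift is frozen. For the discrete chain I would square the recursion \eqref{ULA}, writing $|\theta_{n+1}^{\lambda}|^2 = |\theta_n^{\lambda}-\lambda h(\theta_n^{\lambda})|^2 + \tfrac{2\lambda}{\beta}|\xi_{n+1}|^2 + 2\sqrt{2\lambda/\beta}\,\langle\theta_n^{\lambda}-\lambda h(\theta_n^{\lambda}),\xi_{n+1}\rangle$, and take conditional expectation given $\theta_n^{\lambda}$: the cross term vanishes since $\xi_{n+1}$ is independent of $\theta_n^{\lambda}$ with mean zero (and the inner product is integrable because $\theta_n^{\lambda}\in L^2$ inductively and $h$ has linear growth), while $\mathbb{E}|\xi_{n+1}|^2=d$. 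Expanding $|\theta_n^{\lambda}-\lambda h(\theta_n^{\lambda})|^2$ and applying the dissipativity estimate \eqref{eqR1} to $-2\lambda\langle\theta_n^{\lambda},h(\theta_n^{\lambda})\rangle$ and the linear growth \eqref{eqA1} (in the form $|h(x)|^2\le 2m^2+2L^2|x|^2$) to $\lambda^2|h(\theta_n^{\lambda})|^2$, I obtain a one-step inequality $\mathbb{E}^{\theta_n^{\lambda}}|\theta_{n+1}^{\lambda}|^2 \le (1-\lambda\mu+2\lambda^2L^2)|\theta_n^{\lambda}|^2 + \lambda(2b+2d/\beta+\mu m^2/L^2)$.

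The role of the restriction $\lambda<\lambda_0<\mu/(2L^2)$ is now apparent: it forces the coefficient $q_\lambda:=1-\lambda\mu+2\lambda^2L^2$ to lie in $(0,1)$, so after taking $\mathbb{E}^{\theta_0}[\cdot]$ I can iterate the recursion and sum the geometric series, getting $\mathbb{E}^{\theta_0}|\theta_n^{\lambda}|^2 \le q_\lambda^{\,n}|\theta_0|^2 + \tfrac{1-q_\lambda^{\,n}}{\lambda(\mu-2\lambda L^2)}\,\lambda(2b+2d/\beta+\mu m^2/L^2) \le C_2(1+|\theta_0|^2)$, where $C_2=(2b+2d/\beta+\mu m^2/L^2)/(\mu-2\lambda L^2)$ stays bounded as $\lambda$ ranges over $(0,\lambda_0)$ and the bound is uniform in $n$. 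This is exactly the Euler-discretised analogue of the Lyapunov argument used for the SDE in Lemma \ref{Lemma1}, with the geometric drift condition replaced by its discretisation.

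For the interpolation I fix $t\ge0$ and use that $\lfloor s\rfloor=\lfloor t\rfloor$ for all $s\in[\lfloor t\rfloor,t]$, so by \eqref{Interpol} $\bar{\theta}_t^{\lambda}=\bar{\theta}_{\lfloor t\rfloor}^{\lambda}-\lambda(t-\lfloor t\rfloor)h(\bar{\theta}_{\lfloor t\rfloor}^{\lambda})+\sqrt{2\lambda/\beta}\,(\tilde{B}_t^{\lambda}-\tilde{B}_{\lfloor t\rfloor}^{\lambda})$. Then I bound $|\bar{\theta}_t^{\lambda}|^2\le 2|\bar{\theta}_{\lfloor t\rfloor}^{\lambda}|^2+2|\bar{\theta}_t^{\lambda}-\bar{\theta}_{\lfloor t\rfloor}^{\lambda}|^2$, control the drift increment by $|\lambda(t-\lfloor t\rfloor)h(\bar{\theta}_{\lfloor t\rfloor}^{\lambda})|^2\le\lambda^2(2m^2+2L^2|\bar{\theta}_{\lfloor t\rfloor}^{\lambda}|^2)$ using $t-\lfloor t\rfloor\le1$ and \eqref{eqA1}, and use $\mathbb{E}|\tilde{B}_t^{\lambda}-\tilde{B}_{\lfloor t\rfloor}^{\lambda}|^2=d(t-\lfloor t\rfloor)\le d$. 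Taking expectations gives $\mathbb{E}^{\theta_0}|\bar{\theta}_t^{\lambda}|^2\le (2+8\lambda^2L^2)\mathbb{E}^{\theta_0}|\bar{\theta}_{\lfloor t\rfloor}^{\lambda}|^2+8\lambda^2m^2+8\lambda d/\beta$; bounding $8\lambda^2L^2\le 8\lambda_0^2L^2\le 2\mu^2/L^2$ and $8\lambda^2m^2\le 2\mu^2m^2/L^4$ makes the coefficients $\lambda$-free, and since $\bar{\theta}_{\lfloor t\rfloor}^{\lambda}=\theta_{\lfloor t/\lambda\rfloor}^{\lambda}$ the first stage yields $\mathbb{E}^{\theta_0}|\bar{\theta}_{\lfloor t\rfloor}^{\lambda}|^2\le C_2(1+|\theta_0|^2)$, hence $\sup_{t\ge0}\mathbb{E}^{\theta_0}|\bar{\theta}_t^{\lambda}|^2\le C_3(1+|\theta_0|^2)$ with $C_3$ of the stated form.

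I do not expect any serious obstacle: the one point requiring care is organising the constants so the final bound is genuinely independent of $\lambda$, which is handled entirely by the a priori constraint $\lambda<\lambda_0$ together with the elementary facts $t-\lfloor t\rfloor\le1$ and $q_\lambda\in(0,1)$; everything else is an $L^2$ expansion and the standard martingale-difference property of the Gaussian increments.
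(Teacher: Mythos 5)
Your proposal is correct and follows essentially the same two-stage argument as the paper: first a uniform-in-$n$ discrete Lyapunov recursion for $\theta_n^{\lambda}$ using dissipativity and linear growth, then a transfer to the interpolated process on $[\lfloor t\rfloor,t]$ via the elementary inequality $|\bar{\theta}_t^\lambda|^2\le 2|\bar{\theta}_t^\lambda-\bar{\theta}_{\lfloor t\rfloor}^\lambda|^2+2|\bar{\theta}_{\lfloor t\rfloor}^\lambda|^2$. You are in fact slightly more careful than the paper's written derivation, which drops the Brownian-increment contribution $8\lambda d/\beta$ after taking expectations in the interpolation step; retaining it, as you do, is harmless and changes $C_3$ only by an absorbed additive constant.
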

\begin{lemma}\label{Lemma3}
Let Assumptions \hyperlink{A1}{A1}-\hyperlink{A3}{A3} hold and $\lambda_0\in(0,\mu/(2L^2))$. Then there exists $C_4>0$ such that for every $\lambda\in(0,\lambda_0)$ one has
\begin{align}
    \mathbb{E}|\bar{\theta}_{\lfloor t \rfloor}^{\lambda}-\bar{\theta}_t^{\lambda}|^2\leq C_4\lambda(1+\mathbb{E}|\theta_0|^2),
\end{align}
where $C_4=2\mu C_3+2\mu m^2/L^2+4d/\beta$.
    \begin{proof}
One considers the difference between $\bar{\theta}_{\lfloor t \rfloor}^{\lambda}, \bar{\theta}_t^{\lambda}$ to get the one-step error
\begin{align*}
    |\bar{\theta}_{\lfloor t \rfloor}^{\lambda}-\bar{\theta}_t^{\lambda}|^2\leq 2\left|\int_{\lfloor t \rfloor}^t\lambda h(\bar{\theta}_{\lfloor s \rfloor}^{\lambda})ds\right|^2+\dfrac{4\lambda}{\beta}\left|\tilde{B}_{\lfloor t \rfloor}^{\lambda}-\tilde{B}_t^{\lambda} \right|^2.
\end{align*}
Taking the expectation and applying H\"older's inequality, the linear growth condition \eqref{eqA1} and Lemma \ref{Lemma2}, yield
\begin{align*}
    \mathbb{E}|\bar{\theta}_{\lfloor t \rfloor}^{\lambda}-\bar{\theta}_t^{\lambda}|^2&\leq 2\lambda^2\int_{\lfloor t \rfloor}^t\left(2m^2+2L^2C_3\left(1+\mathbb{E}|\theta_0|^2\right)\right)ds+4\lambda d/\beta\leq \lambda^2\left(2m^2+2L^2C_3+2L^2C_3\mathbb{E}|\theta_0|^2\right)+4\lambda d/\beta.
\end{align*}\end{proof}
\end{lemma}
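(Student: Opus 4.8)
The plan is to use the fact that, between two consecutive grid points, the interpolation \eqref{Interpol} evolves with a \emph{frozen} drift, so the displacement $\bar{\theta}_t^{\lambda}-\bar{\theta}_{\lfloor t\rfloor}^{\lambda}$ can be written out explicitly as a short time-integral of $h$ evaluated at the last grid point plus a Brownian increment. Concretely, since $\lfloor r\rfloor=\lfloor t\rfloor$ for every $r\in[\lfloor t\rfloor,t]$, one has
\[
\bar{\theta}_t^{\lambda}-\bar{\theta}_{\lfloor t\rfloor}^{\lambda}=-\lambda\int_{\lfloor t\rfloor}^{t}h(\bar{\theta}_{\lfloor t\rfloor}^{\lambda})\,dr+\sqrt{2\lambda\beta^{-1}}\bigl(\tilde{B}_t^{\lambda}-\tilde{B}_{\lfloor t\rfloor}^{\lambda}\bigr),
\]
and I would then apply $|a+b|^2\le 2|a|^2+2|b|^2$ to separate a drift contribution from a diffusion contribution.

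For the drift term I would bound $\bigl|\lambda\int_{\lfloor t\rfloor}^{t}h(\bar{\theta}_{\lfloor r\rfloor}^{\lambda})\,dr\bigr|^2$ by Cauchy--Schwarz together with $t-\lfloor t\rfloor\le 1$, yielding $\lambda^2\int_{\lfloor t\rfloor}^{t}|h(\bar{\theta}_{\lfloor r\rfloor}^{\lambda})|^2\,dr$, and then invoke the linear growth Assumption \hyperlink{A1}{A1} in the form $|h(x)|^2\le 2m^2+2L^2|x|^2$. Taking $\mathbb{E}^{\theta_0}$ and using the uniform second-moment bound of Lemma \ref{Lemma2}, namely $\mathbb{E}^{\theta_0}|\bar{\theta}_{\lfloor t\rfloor}^{\lambda}|^2\le C_3(1+|\theta_0|^2)$, turns this into a quantity of order $\lambda^2(1+|\theta_0|^2)$. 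For the diffusion term I would use that the Brownian increment over an interval of length $t-\lfloor t\rfloor\le 1$ satisfies $\mathbb{E}|\tilde{B}_t^{\lambda}-\tilde{B}_{\lfloor t\rfloor}^{\lambda}|^2=d(t-\lfloor t\rfloor)\le d$, contributing a term of order $\lambda d/\beta$.

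The last step is to collapse everything into a single bound of the form $C_4\lambda(1+|\theta_0|^2)$; this is where the stepsize restriction $\lambda\in(0,\lambda_0)$ with $\lambda_0<\mu/2L^2$ matters, since $\lambda L^2<\mu/2$ gives $\lambda^2\le(\mu/2L^2)\lambda$, which converts the $\lambda^2$ drift contribution into $2\mu\lambda(m^2/L^2+C_3)(1+|\theta_0|^2)$, matching $C_4$. I do not anticipate a genuine difficulty here; the proof is essentially a one-step discretisation estimate, and the only point needing care is the bookkeeping of constants so that the final coefficient agrees with the stated $C_4=2\mu(m^2/L^2+C_3)$ (and, for a fully clean statement, folding the $O(\lambda d/\beta)$ Brownian contribution into the constant as well).
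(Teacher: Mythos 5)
Your proposal is correct and follows essentially the same route as the paper: write the one-step increment as frozen drift plus Brownian increment, split with $|a+b|^2\le 2|a|^2+2|b|^2$, apply H\"older and the linear growth bound $|h(x)|^2\le 2m^2+2L^2|x|^2$ to the drift term, invoke the uniform moment bound from Lemma \ref{Lemma2}, and use $\lambda<\mu/2L^2$ to trade one power of $\lambda$ into the constant $\mu/L^2$, which reproduces $C_4=2\mu(m^2/L^2+C_3)$. You are in fact slightly more careful than the paper: after the decomposition, the paper's displayed chain of inequalities silently drops the term $\dfrac{4\lambda}{\beta}\mathbb{E}\bigl|\tilde{B}_t^{\lambda}-\tilde{B}_{\lfloor t\rfloor}^{\lambda}\bigr|^2\le 4\lambda d/\beta$, so the stated $C_4$ only covers the drift contribution; you correctly flag that this $O(\lambda d/\beta)$ piece should be folded into the constant for a fully clean statement.
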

\begin{lemma}\label{Lemma4} Let Assumptions \hyperlink{A1}{A1}-\hyperlink{A3}{A3} hold and $\lambda_0\in(0,\mu/(2L^2))$. Then there exists $C_5>0$, such that for every $\lambda\in(0,\lambda_0)$ and $n\in\mathbb{N}$, one has\begin{align}
    \sup_{nT\leq t\leq (n+1)T}\mathbb{E}|\bar{\zeta}_t^{\lambda,n}|^2\leq C_5(1+\mathbb{E}|\theta_0|^2),
\end{align}
where $C_5=C_3+2(d/\beta+b).$
    \begin{proof}
       Standard arguments show that one has boundedness enough that the stochastic integral vanishes,, we obtain the existence of a constant c, which depends on time, such that $\sup_{t\geq nT}\mathbb{E}|\bar{\zeta}_t^{\lambda,n}|^2\leq c<\infty$. Furthermore, by applying It\^o's formula and taking expectations  one has
       \begin{align*}
           \mathbb{E}|\bar{\zeta}_t^{\lambda,n}|^2=\mathbb{E}|\bar{\theta}_{nT}^{\lambda}|^2-\int_{nT}^t \lambda\mathbb{E}\langle h(\bar{\zeta}_s^{\lambda,n}),\bar{\zeta}_s^{\lambda,n}\rangle ds + 2\lambda d\beta^{-1}(t-nT).
       \end{align*}
       Then, differentiating both sides and using \eqref{eqR1}
       \begin{align*}
           \dfrac{d}{dt}\mathbb{E}|\bar{\zeta}_t^{\lambda,n}|^2&\leq -\lambda\mu\mathbb{E}|\bar{\zeta}_s^{\lambda,n}|^2 + 2\lambda(d/\beta+b)\\
           \dfrac{d}{dt}e^{\lambda\mu(t-nT)}\mathbb{E}|\bar{\zeta}_t^{\lambda,n}|^2&\leq 2\lambda(d/\beta+b)e^{\lambda\mu(t-nT)}\\
           \mathbb{E}|\bar{\zeta}_t^{\lambda,n}|^2&\leq e^{-\lambda\mu(t-nT)}\mathbb{E}|\bar{\theta}_{nT}^{\lambda}|^2+2\lambda(t-nT)(d/\beta+b).
       \end{align*}
       Due to $nT\leq t\leq(n+1)T$ and in view Lemma \ref{Lemma2} one gets
       \begin{align*}
           \mathbb{E}|\bar{\zeta}_t^{\lambda,n}|^2\leq C_3(1+\mathbb{E}|\theta_0|^2)+2(d/\beta+b).
       \end{align*}
    \end{proof}
\end{lemma}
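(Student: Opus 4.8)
The plan is to treat the auxiliary process $(\bar{\zeta}_t^{\lambda,n})_{t\ge nT}$ exactly as the time-scaled Langevin SDE \eqref{scaled} of Lemma \ref{Lemma1}, the only difference being that it is started at time $nT$ from the random point $\bar{\theta}_{nT}^{\lambda}$ instead of from $\theta_0$ (cf.\ the remark following Definition \ref{def1}). Accordingly, the argument recycles the dissipativity estimate \eqref{eqR1} together with the already-established moment bound of Lemma \ref{Lemma2}.

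First I would dispose of an a priori integrability issue, which I expect to be the only genuinely delicate point: before taking expectations in an It\^o decomposition of $|\bar{\zeta}_t^{\lambda,n}|^2$ (and before differentiating under the expectation) one must know that the second moment is finite, and since $h$ only grows linearly the relevant stochastic integral need not be a true martingale until it is localised. So I would introduce $\tau_R:=\inf\{t\ge nT:|\bar{\zeta}_t^{\lambda,n}|\ge R\}$, apply It\^o's formula to $(t,x)\mapsto e^{\lambda\mu(t-nT)/2}|x|^2$ along the stopped process, bound the drift term $-2\lambda\langle\bar{\zeta}_s^{\lambda,n},h(\bar{\zeta}_s^{\lambda,n})\rangle$ using \eqref{eqR1}, observe that the stochastic integrand is bounded on $[nT,t\wedge\tau_R]$ so its expectation vanishes, run a Gr\"onwall step, and finally let $R\to\infty$ via Fatou's lemma. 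This produces $\sup_{nT\le t\le(n+1)T}\mathbb{E}|\bar{\zeta}_t^{\lambda,n}|^2<\infty$, which legitimises all subsequent manipulations.

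With finiteness in hand, I would apply It\^o's formula to $|\bar{\zeta}_t^{\lambda,n}|^2$ directly: by \eqref{eqR1} the drift contributes at most $-\lambda\mu|\bar{\zeta}_s^{\lambda,n}|^2+2\lambda b$, and the $d$-dimensional quadratic variation of the noise $\sqrt{2\lambda/\beta}\,\tilde{B}^{\lambda}$ contributes $2\lambda d/\beta$. Taking expectations and differentiating in $t$ yields the linear differential inequality
\[
\frac{d}{dt}\mathbb{E}|\bar{\zeta}_t^{\lambda,n}|^2\le-\lambda\mu\,\mathbb{E}|\bar{\zeta}_t^{\lambda,n}|^2+2\lambda(d/\beta+b),
\]
and the integrating factor $e^{\lambda\mu(t-nT)}$ together with the elementary bound $1-e^{-x}\le x$ gives
\[
\mathbb{E}|\bar{\zeta}_t^{\lambda,n}|^2\le e^{-\lambda\mu(t-nT)}\,\mathbb{E}|\bar{\theta}_{nT}^{\lambda}|^2+2\lambda(t-nT)(d/\beta+b).
\]

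To conclude, I would restrict to $nT\le t\le(n+1)T$. Since $t-nT\le T=\lfloor 1/\lambda\rfloor\le 1/\lambda$ we have $\lambda(t-nT)\le 1$, so the second term is at most $2(d/\beta+b)$, while $e^{-\lambda\mu(t-nT)}\le 1$ and Lemma \ref{Lemma2} bounds $\mathbb{E}|\bar{\theta}_{nT}^{\lambda}|^2$ by $C_3(1+|\theta_0|^2)$. Adding the two estimates and factoring out $(1+|\theta_0|^2)$ gives the claim with $C_5=C_3+2(d/\beta+b)$. Everything past the localisation step is a routine Gr\"onwall computation of the same type already carried out in Lemmas \ref{Lemma1} and \ref{Lemma2}.
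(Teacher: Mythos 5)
Your proposal is correct and follows the same route as the paper: localisation plus Fatou to secure a priori integrability, then It\^o's formula combined with the dissipativity bound \eqref{eqR1}, the integrating factor $e^{\lambda\mu(t-nT)}$, the observation $\lambda(t-nT)\le\lambda T\le 1$, and finally Lemma \ref{Lemma2} to control $\mathbb{E}|\bar{\theta}_{nT}^{\lambda}|^2$. The only difference is that you spell out the stopping-time/Fatou step that the paper merely summarises as ``standard arguments.''
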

\begin{lemma}\label{Lemma6}
Let Assumptions \hyperlink{A1}{A1}-\hyperlink{A4}{A4} hold and $\lambda_0\in(0,\mu/(2L^2))$. Then there exists $C_6>0$, such that for every $\lambda\in(0,\lambda_0)$,  $n\in\mathbb{N}$ and $t\in [nT,(n+1)T]$, one obtains
    \[W_2(\mathcal{L}(\tht),\mathcal{L}(\zt))\leq C_{6} \lambda^{1/4},\]
    where $C_{6}=\sqrt{2}e^{2K}\left(C_4(1+\mathbb{E}|\theta_0|^2)\right)^{1/4}\sqrt{\sqrt{C_4(1+\mathbb{E}|\theta_0|^2)}+2L\left(1+ \sqrt{C_5(1+\mathbb{E}|\theta_0|^2)}+\sqrt{C_{2}(1+\mathbb{E}|\theta_0|^2)}\right)}$.
    The same result holds if one replaces Assumption \hyperlink{A2}{A2} with \hyperlink{A5}{A5}.
    \begin{proof}
     In order to bound the $W_2$ distance it suffices to bound $\E|\tht-\zt|^2$ where these processes are solutions to SDEs with same initial condition and same Brownian motion.
        Applying Itô's formula one obtains
        \[\begin{aligned}
            \E |\tht-\zt|^2 &=-2\lambda \intt \E \langle \ts-\zs, h(\tis)-h(\zs)\rangle ds
            \\&
            = -2\lambda \intt \E \langle \tis-\zs,h(\tis)-h(\zs)\rangle ds 
            -2\lambda \intt \E \langle \ts-\tis,h(\tis)-h(\zs)\rangle ds
            \\&
            \leq 2\lambda K \intt \E |\tis-\zs|^2 ds +
           2 \lambda \E \intt |\ts-\tis||h(\tis)-h(\zs)|ds\\&
           \leq 4\lambda K \intt \E |\ts-\zs|^2 ds 
           \\& +4 \lambda K \intt \E |\tis-\ts|^2 ds 
           + 2 \lambda \E \intt |\ts-\tis||h(\tis)-h(\zs)|ds,
        \end{aligned}\]
       where the first inequality was obtained using the one-sided Lipschitz Assumption \hyperlink{A4}{A4}.
       The second term can be controlled by Lemma \ref{Lemma4} while for the third term we apply H\"older's inequality with $\epsilon=1$,
  \begin{equation}\label{eq-w1-2}
\begin{aligned}
    \E \intt |\ts-\tis||h(\tis)-h(\zs)|ds &\leq \intt \left(\E |\ts-\tis|^{1+\epsilon}\right)^{1/(1+\epsilon)}\left(\E |h(\tis)-h(\zs)|^{(1+\epsilon)/\epsilon}\right)^{\epsilon/(1+\epsilon)}ds\\
    &\leq \intt \sqrt{\E |\ts-\tis|^2}\sqrt{2L^2 \E(1 +|\zs|^2+|\tis|^2)}ds\\
    &\leq 2LT \sqrt{C_4(1+\mathbb{E}|\theta_0|^2)}\\
    &\times\left(1+ \sqrt{C_{5}(1+\mathbb{E}|\theta_0|^2)}+\sqrt{C_{2}(1+\mathbb{E}|\theta_0|^2)}\right) \lambda^{1/2},
    \end{aligned}
\end{equation}
where the inequality follows from the Cauchy-Schwarz inequality, the second uses the linear growth property of the gradient, (Assumption \hyperlink{A1}{A1}) and the final bound is obtained using estimates in Lemma \ref{Lemma3}, along with the moment bounds of the algorithm and the auxiliary process provided in Lemmata \ref{Lemma2} and \ref{Lemma4} respectively. Putting all together, leads to 
      \[\E |\tht-\zt|^2\leq 4\lambda K \intt \E |\ts-\zs|^2 ds +2C\lambda^{1/2},\]
      where $C=2C_4(1+\mathbb{E}|\theta_0|^2)+\sqrt{C_4(1+\mathbb{E}|\theta_0|^2)}2L\left(1+ \sqrt{C_5(1+\mathbb{E}|\theta_0|^2)}+\sqrt{C_{2}(1+\mathbb{E}|\theta_0|^2)}\right)$.
      Since the right hand side is finite (as there is a control of the moments of the algorithm and the auxiliary process at finite time), one can apply Gr\"onwall's inequality which yields
      \[\E |\tht-\zt|^2\leq 2 e^{4K}C\lambda^{1/2}.\]
      Since $W_2(\mathcal{L}(\tht),\mathcal{L}(\zt))\leq \sqrt{\E |\tht-\zt|^2}$ the result follows immediately.   
    \end{proof}
\end{lemma}
\begin{remark}\label{newremark}
    According to the choice made in \eqref{eq-w1-2}, setting $\epsilon=1$, leads to a discretization error of order $\lambda^{1/4}$ in $W_2$ distance. More generally, taking arbitrary $\epsilon>0$ in the H\"older step produces a bound of the form $C_{\epsilon}\lambda^{1/(2+2\epsilon)}$ which approaches the classical $\lambda^{1/2}$ rate when $\epsilon$ is small. The constant $C_{\epsilon}$ deteriorates as $\epsilon$ decreases, in particular through a stronger dependence on the dimension. For this reason, we work with $\epsilon=1$, which results in a milder dependence on the dimension.
\end{remark}
\section{Contraction Estimates}
\begin{proposition}\label{prop2}
Let Assumptions \hyperlink{A1}{A1}-\hyperlink{A4}{A4} hold. Consider $Z'_t, t\geq 0,$ be the solution of \eqref{eq:001} with initial condition $Z'_0=\theta'_0$, which is independent of $\mathcal{F}_{\infty}$ and satisfies Assumption \hyperlink{A3}{A3}. Then \begin{align}
    W_1\left(\mathcal{L}(Z_t),\mathcal{L}(Z'_t)\right)\leq C_{W_1}e^{-C_{r_1}t}W_1(\mathcal{L}(\theta_0),\mathcal{L}(\theta'_0)),
\end{align} where $C_{W_1}=2e^{\beta KR^2/8}$ and $C_{r_1}=2\beta^{-1} C_0^{'} $ with \begin{align*}
     C_0^{'}=   
    \begin{cases} 
\dfrac{2}{3e}\min(1/R^2,\mu\beta/8)& \text{if } \beta KR^2\leq8, \\
\left(8\sqrt{2\pi}R^{-1}(\beta K)^{-1/2}((\beta K)^{-1}+(\beta \mu)^{-1})\exp(\beta KR^2/8)+32(\beta\mu R)^{-2}\right)^{-1}& \text{if } \beta KR^2\geq8.
\end{cases}   
    \end{align*}
    \begin{proof}
        It follows directly by invoking Theorem 1, Corollary 2 and Lemma 1 in \cite{Eberle1}. 
    \end{proof}
\end{proposition}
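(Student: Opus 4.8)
The plan is to reduce the statement to the reflection--coupling contraction theorem of \cite{Eberle1}, after checking that its hypotheses survive the absence of smoothness. Write the drift of \eqref{eq:001} as $b:=-h$. Assumption \hyperlink{A4}{A4} gives $\langle b(x)-b(y),x-y\rangle=-\langle h(x)-h(y),x-y\rangle\le K|x-y|^2$ for all $x,y$, and Assumption \hyperlink{A2}{A2} strengthens this to $\langle b(x)-b(y),x-y\rangle\le-\mu|x-y|^2$ once $|x-y|\ge R$. Hence, setting $\kappa(r):=K$ on $[0,R)$ and $\kappa(r):=-\mu$ on $[R,\infty)$, the single scalar bound $\langle b(x)-b(y),x-y\rangle\le\kappa(|x-y|)\,|x-y|^2$ holds; $\kappa$ is bounded above, equals the negative constant $-\mu$ outside a bounded set, and $\int_0^\infty r\,\kappa(r)^-\,dr=+\infty$, which are precisely the standing hypotheses of Theorem 1 in \cite{Eberle1}. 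Only this \emph{global} inequality enters below: it controls the radial drift of the coupled pair uniformly, regardless of where the discontinuities of $h$ sit, so beyond the well-posedness of \eqref{eq:001} (Proposition \ref{prop1}) no regularity of $b$ is used.

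First I would couple $(\theta_0,\theta'_0)$ through an optimal $W_1$ transference plan, independently of the driving Brownian motion $B$, and let $(Z_t,Z'_t)$ solve \eqref{eq:001} with $Z$ driven by $B$ and $Z'$ by the reflected motion $dB'_t=(\mathrm{Id}-2e_te_t^\top)\,dB_t$, $e_t:=(Z_t-Z'_t)/|Z_t-Z'_t|$, up to the coupling time $\tau:=\inf\{t\ge0:Z_t=Z'_t\}$, after which $Z'_t:=Z_t$. This coupled system is well posed by Proposition \ref{prop1}, since the reflection only modifies the constant-magnitude, non-degenerate diffusion coefficient and the two copies coincide for $t\ge\tau$. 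Writing $r_t:=|Z_t-Z'_t|$, It\^o's formula yields, for $t<\tau$,
\begin{align*}
dr_t=-\langle h(Z_t)-h(Z'_t),e_t\rangle\,dt+2\sqrt{2\beta^{-1}}\,dW_t\le\kappa(r_t)\,r_t\,dt+2\sqrt{2\beta^{-1}}\,dW_t,
\end{align*}
with $W$ a one-dimensional Brownian motion: the radial second-order term drops out because under the reflection coupling all the noise of $Z_t-Z'_t$ acts along $e_t$, and the inequality is the scalar bound above (multiplied by $f'>0$ in the next step).

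The heart of the proof is then Eberle's concave Lyapunov function. Following \cite{Eberle1}, set $\phi(r)=\exp\!\big(-\tfrac{1}{4}\int_0^r s\,\kappa(s)^+\,ds\big)$ (in the diffusion-normalisation of \cite{Eberle1}), which decreases on $[0,R]$ and is constant on $[R,\infty)$; put $\Phi(r)=\int_0^r\phi$ and
\begin{align*}
f(r)=\int_0^r\phi(s)\Big(1-\tfrac{C_{r_1}}{2}\int_0^{s\wedge R}\tfrac{\Phi(t)}{\phi(t)}\,dt\Big)\,ds,
\end{align*}
where $C_{r_1}=2C_0'/\beta$ is the largest constant keeping the inner bracket in $[\tfrac{1}{2},1]$ on $[0,\infty)$; optimising this bracket over the two regimes $KR^2\le8$ and $KR^2\ge8$ produces exactly the two-case expression for $C_0'$ in the statement, which is Corollary 2 of \cite{Eberle1}. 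By construction $f$ is concave, strictly increasing, piecewise $C^2$, with $f(0)=0$, and obeys $4\beta^{-1}f''(r)+\kappa(r)\,r\,f'(r)\le-C_{r_1}f(r)$ for a.e.\ $r$. Applying It\^o's formula to $e^{C_{r_1}t}f(r_t)$ — the second-order contributions at the single non-smooth point of $f$ having the favourable sign by concavity, and the part $t\ge\tau$ being trivial since $f(0)=0$ — shows $e^{C_{r_1}t}f(r_t)$ is a supermartingale, so $\E[f(r_t)]\le e^{-C_{r_1}t}\,\E[f(r_0)]$. Lemma 1 of \cite{Eberle1} then furnishes a two-sided comparison $c_1\,r\le f(r)\le r$ for all $r\ge0$ with $c_1=\tfrac12\phi(R)\in(0,1)$ explicit (upper bound: $\phi\le1$ and bracket $\le1$; lower bound: $\phi\ge\phi(R)$ and bracket $\ge\tfrac12$), whence
\begin{align*}
W_1\big(\mathcal{L}(Z_t),\mathcal{L}(Z'_t)\big)\le\E[r_t]\le c_1^{-1}\,\E[f(r_t)]\le c_1^{-1}\,e^{-C_{r_1}t}\,\E[f(r_0)]\le c_1^{-1}\,e^{-C_{r_1}t}\,\E[r_0],
\end{align*}
and taking the infimum over couplings of $(\theta_0,\theta'_0)$ turns $\E[r_0]$ into $W_1(\mathcal{L}(\theta_0),\mathcal{L}(\theta'_0))$. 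This is the claim, with $C_{r_1}=2C_0'/\beta$ and $C_{W_1}=c_1^{-1}$ the explicit constants of \cite{Eberle1} collected in the statement.

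The step I expect to cost the most care is verifying that this whole Eberle apparatus — the reflection coupling together with the It\^o computation for $f(r_t)$ — is legitimate when $b=-h$ is only a linearly growing subgradient rather than a continuous vector field. The decisive facts are that \eqref{eq:001} is still uniquely strongly solvable (Proposition \ref{prop1}), that the reflected coupled SDE inherits well-posedness because the modification is confined to the still non-degenerate diffusion matrix, and that the sole analytic input to the contraction estimate for $r_t$ is the scalar one-sided bound $\langle b(x)-b(y),x-y\rangle\le\kappa(|x-y|)\,|x-y|^2$, which holds globally under Assumptions \hyperlink{A2}{A2} and \hyperlink{A4}{A4} and is blind to the discontinuity set of $h$. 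Everything downstream is the deterministic ODE bookkeeping of \cite{Eberle1}, so once the coupling is in place the remainder is routine.
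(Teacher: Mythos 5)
Your proof takes essentially the same route as the paper: the paper's proof is a one-line invocation of Theorem~1, Corollary~2 and Lemma~1 of \cite{Eberle1}, and what you write is a faithful unpacking of exactly that machinery — the scalar one-sided bound $\langle b(x)-b(y),x-y\rangle\le\kappa(|x-y|)|x-y|^2$ with the piecewise constant $\kappa$ built from \hyperlink{A2}{A2} and \hyperlink{A4}{A4}, the reflection coupling, the concave Lyapunov function $f$ built from $\phi$ and $\Phi$ (Theorem~1), the two-case optimisation of the rate giving $C_0'$ (Corollary~2), and the two-sided comparison $\tfrac12\phi(R)\,r\le f(r)\le r$ (Lemma~1). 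You also correctly flag the one delicate point the paper leaves implicit — that the reflection-coupling It\^o computation only needs the global one-sided bound, not any regularity of $h$, so it survives the discontinuity of the subgradient once strong well-posedness of the coupled SDE is in hand.

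One small discrepancy worth noting: your reconstruction gives $c_1=\tfrac12\phi(R)=\tfrac12 e^{-KR^2/8}$, hence $C_{W_1}=c_1^{-1}=2e^{+KR^2/8}$, whereas the proposition (and Table~2) state $C_{W_1}=2e^{-KR^2/8}$. Your sign is the correct one — $f\le\mathrm{id}$ and $f\ge c_1\,\mathrm{id}$ with $c_1<1$ force $C_{W_1}>1$, and indeed Theorem~\ref{Theorem1} in the main text quotes $C_{W_1}=e^{KR^2/8}$ with the positive exponent. So the minus sign in Proposition~\ref{prop2} appears to be a typo in the paper, and your derivation silently corrects it.
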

\begin{proposition}\label{prop3} Let Assumptions \hyperlink{A1}{A1}-\hyperlink{A4}{A4} hold. Consider $Z'_t, t\geq 0,$ be the solution of \eqref{eq:001} with initial condition $Z'_0=\theta'_0$, which is independent of $\mathcal{F}_{\infty}$ and satisfies Assumption \hyperlink{A3}{A3}. Then, for any $\epsilon\in(0,\sqrt{\beta/8}\mu)$\begin{align}
    W_2\left(\mathcal{L}(Z_t),\mathcal{L}(Z'_t)\right)\leq C_{W_2}e^{-C_{r_2}t}\max\left\{W_2(\mathcal{L}(\theta_0),\mathcal{L}(\theta'_0)),\sqrt{W_1(\mathcal{L}(\theta_0),\mathcal{L}(\theta'_0))}\right\},
\end{align} where $C_{W_2}=2\max\{1,R^{-1/2}\}C_0^{''}(\epsilon)e^{\left(\sqrt{\beta/32}(\mu+K)+\epsilon/2\right)\beta R^2/2}\sqrt{(2/\beta)\max\{4/\epsilon+2,8/(e\epsilon^2)\}/(\sqrt{\beta/2}R+1)}$,\\ $C_{r_2}=2\min\{1,1/\epsilon\}e^{-(1/4)\sqrt{(\beta/2)^3}(\mu+K)R^2}/C_0^{''}(\epsilon)$, and $C_0^{''}(\epsilon)$ depends exclusively on $\beta \mu$ and can be found in Table \hyperlink{table}{2}.
    \begin{proof}
    It follows directly by invoking Theorem 1.3 in \cite{W2rate}.
    \end{proof}
\end{proposition}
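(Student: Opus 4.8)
The plan is to prove Proposition \ref{prop3} by a coupling argument combined with a suitably distorted distance function, following the reflection-coupling methodology already used for the $W_1$ bound in Proposition \ref{prop2} (cf. \cite{Eberle1}) but upgraded to the quadratic Wasserstein distance as in \cite{W2rate}. First I would construct a coupling $(Z_t,Z'_t)_{t\ge0}$ of the two solutions of \eqref{eq:001} driven by the \emph{same} Brownian motion when $|Z_t-Z'_t|$ is small and by a \emph{reflection} coupling when $|Z_t-Z'_t|$ is large, so that $r_t:=|Z_t-Z'_t|$ solves, before the coupling time, a one-dimensional equation of the form
\[ dr_t = -\langle e_t,\,h(Z_t)-h(Z'_t)\rangle\,dt + \sqrt{8\beta^{-1}}\,dW_t,\qquad e_t=(Z_t-Z'_t)/r_t,\]
with $W$ a one-dimensional Brownian motion present in the reflection region. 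The drift is controlled \emph{only} through the one-sided inequalities at hand: by Assumption \hyperlink{A2}{A2} it is $\le-\mu\,r_t$ when $r_t\ge R$, and by the $K$-semi-convexity Assumption \hyperlink{A4}{A4} it is $\le K\,r_t$ when $r_t<R$; Assumption \hyperlink{A1}{A1} together with Remark \ref{remark1} and Lemma \ref{Lemma1} enter only to ensure well-posedness of the coupling and uniform-in-time second moment bounds.

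Next I would introduce a strictly increasing, concave $f\in C^1([0,\infty))$, piecewise $C^2$, with $f(0)=0$, built from an explicit weight that decreases steeply on $[0,R]$ — so that the concavity correction $4\beta^{-1}f''(r)$ coming from the reflection noise dominates the anti-dissipative term $K\,r f'(r)$ — and is essentially flat on $[R,\infty)$. This is precisely the construction giving $\mathcal{A}f(r)\le-c\,f(r)$ for the generator $\mathcal{A}$ of $r_t$, whence $\mathbb{E}[f(r_t)]\le e^{-ct}\mathbb{E}[f(r_0)]$ by It\^o's formula and Gr\"onwall, which is the $W_1$ statement of Proposition \ref{prop2}. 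To reach $W_2$, the plain interpolation $W_2\le W_1^{\theta}W_p^{1-\theta}$ is unavailable here since Assumption \hyperlink{A3}{A3} only gives finite second moments of the initial laws (no uniform bound on moments of order $p>2$ is carried from the initial condition). Instead, following \cite{W2rate}, I would track a functional comparable to $r_t^2$ near the diagonal, of the form $G(r)\asymp(f_\epsilon(r))^2$ for a distorted metric $f_\epsilon$ depending on a small parameter $\epsilon>0$, and apply It\^o's formula to $\mathbb{E}[G(r_t)]$. Differentiating $f_\epsilon^2$ produces, besides the good terms, the It\^o contribution $8\beta^{-1}(f_\epsilon'(r_t))^2$ from the reflection noise, which does not vanish as $r_t\to0$; this is absorbed by taking $\epsilon$ small and invoking the uniform second-moment control of $(Z_t)$ and $(Z'_t)$ from Lemma \ref{Lemma1}, at the price of a linear-in-$r$ remainder. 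This yields an estimate of the shape
\[ W_2^2\!\left(\mathcal{L}(Z_t),\mathcal{L}(Z'_t)\right)\le C\,e^{-C_{r_2}t}\Big(W_2^2\!\left(\mathcal{L}(\theta_0),\mathcal{L}(\theta'_0)\right)+W_1\!\left(\mathcal{L}(\theta_0),\mathcal{L}(\theta'_0)\right)\Big),\]
and taking square roots (with $\sqrt{a+b}\le\sqrt2\max\{\sqrt a,\sqrt b\}$) produces the stated bound with the factor $\max\{W_2,\sqrt{W_1}\}$; optimising $\epsilon$ gives the explicit constants $C_{W_2}$ and $C_{r_2}$. In practice this entire construction is Theorem 1.3 of \cite{W2rate}, whose hypotheses — constant diffusion coefficient, contractivity of the drift outside a compact set (Assumption \hyperlink{A2}{A2}), one-sided Lipschitz drift (Assumption \hyperlink{A4}{A4}), linear growth and finite initial second moment (Assumptions \hyperlink{A1}{A1} and \hyperlink{A3}{A3}) — are all verified here, so the proposition follows by direct invocation.

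The hard part will be exactly the incompatibility between reflection coupling and the quadratic cost. Reflection coupling is indispensable for forcing the two copies to meet, and it is what handles the non-monotone region $\{r<R\}$ where Assumption \hyperlink{A4}{A4} controls the drift only with the ``wrong'' sign; but it injects order-one transversal noise into $r_t$, so $r_t^2$ is not even locally a supermartingale near the origin. Reconciling this — through the careful choice of the distorted metric $f_\epsilon$ and of the parameter $\epsilon$, while keeping all constants explicit — is the delicate step, and it is the source both of the $\max\{W_2,\sqrt{W_1}\}$ structure of the bound and of the extra exponential factor $e^{-R(\mu R/2+L)}$ appearing in $C_{r_2}$.
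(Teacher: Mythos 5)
Your proposal is correct and ultimately takes the same route as the paper: verify that Assumptions \hyperlink{A1}{A1}--\hyperlink{A4}{A4} give the constant diffusion, one-sided Lipschitz drift, contractivity outside a compact, and finite initial second moments needed for Theorem~1.3 of \cite{W2rate}, and then invoke that theorem. The reflection-coupling and distorted-metric discussion you give is an accurate account of what that cited theorem does internally, but the paper's proof is exactly your final sentence — a direct citation — so there is no substantive difference.
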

\begin{proposition}\label{prop-mon}
    Let Assumptions \hyperlink{A1}{A1}, \hyperlink{A3}{A3}, \hyperlink{A4}{A4}, \hyperlink{A5}{A5} hold.
    Consider $Z'_t, t\geq 0,$ be the solution of \eqref{eq:001} with initial condition $Z'_0=\theta'_0$, which is independent of $\mathcal{F}_{\infty}$ and satisfies Assumption \hyperlink{A3}{A3}. Then \begin{align}
    W_2\left(\mathcal{L}(Z_t),\mathcal{L}(Z'_t)\right)\leq C_{W_2}^* e^{-C_{r_3} t} W_2(\mathcal{L}(Z_0),\mathcal{L}(Z'_0)),
    \end{align}
    where $C_{W_2}^*=\sqrt{1+(2d)^{-1}\beta (2K+\mu)(2+2K/\mu)^{2/d}}$ and $C_{r_3}=\mu/4.$
\end{proposition}
\begin{proof}
    It follows directly by invoking Theorem 1 in \cite{monmarche2023wasserstein}.
\end{proof}
\begin{lemma}\label{Lemma7}
Let Assumptions \hyperlink{A1}{A1}-\hyperlink{A4}{A4} hold and $\lambda_0\in(0,\mu/(2L^2))$. Then there exists $C_7>0$, such that for every $\lambda\in(0,\lambda_0)$,  $n\in\mathbb{N}$ and $t\in [nT,(n+1)T]$, one obtains
        \[W_1(\mathcal{L}(\zt),\mathcal{L}(Z_t^\lambda))\leq C_7 \lambda^{1/4},\]
        where $C_7=C_6C_{W_1}/\left(1-e^{-C_{r_1}/2}\right)$.
        \end{lemma}
    \begin{proof}
        Recall that $\mathcal{L}(Z_t^\lambda)=\mathcal{L}(\zeta^{\lambda,0}_t)$ so using the triangle inequality for the Wasserstein distance one deduces that 
        \[\begin{aligned}
            W_1(\mathcal{L}(\zt),\mathcal{L}(Z_t^\lambda)) 
            &\leq\sum_{k=1}^n W_1\left(\mathcal{L}\left(\bar{\zeta}_t^{\lambda, k}\right), \mathcal{L}\left(\bar{\zeta}_t^{\lambda, k-1}\right)\right)
            \\&=\sum_{k=1}^n W_1\left(\mathcal{L}\left({\zeta}_t^{kT, \bar{\theta}^\lambda_{kT},\lambda}\right), \mathcal{L}\left({\zeta}_t^{(k-1)T, \bar{\theta}^\lambda_{(k-1)T},\lambda}\right)\right)
            \\&=\sum_{k=1}^n W_1\left(\mathcal{L}\left({\zeta}_t^{kT, \bar{\theta}^\lambda_{kT},\lambda}\right), \mathcal{L}\left({\zeta}_t^{kT, \bar{\zeta}^{\lambda,k-1}_{kT},\lambda}\right)\right)
            \\&\leq C_{W_1} \sum_{k=1}^n \exp (-C_{r_1}(n-k)\lambda T) W_1\left(\mathcal{L}\left(\bar{\theta}_{k T}^\lambda\right), \mathcal{L}\left(\bar{\zeta}_{k T}^{\lambda, k-1}\right)\right)
            \\&\leq C_{W_1} \sum_{k=1}^n \exp (-C_{r_1}(n-k)\lambda T) W_2\left(\mathcal{L}\left(\bar{\theta}_{k T}^\lambda\right), \mathcal{L}\left(\bar{\zeta}_{k T}^{\lambda, k-1}\right)\right) ,
            \end{aligned}\]
            where in the first two equalities we used the definition \ref{def1} of auxiliary process and the in the last inequalities we applied the contraction property in Proposition \ref{prop2} and the fact that $W_1\leq W_2$.
            This further implies, due to $\lambda T=\lambda\lfloor 1/\lambda\rfloor\in(1/2,1]$ and the discretization error estimates from Lemma \ref{Lemma6}
        \[W_1(\mathcal{L}(\zt),\mathcal{L}(Z_t^\lambda)) \leq C_{W_1} \frac{1}{1-e^{-C_{r_1}/2}}C_{6} {\lambda}^{1/4}.\]
    \end{proof}

\begin{lemma}\label{Lemma8} Let Assumptions \hyperlink{A1}{A1}-\hyperlink{A4}{A4} hold hold and $\lambda_0\in(0,\mu/(2L^2))$. Then there exists $C_8>0$, such that for every $\lambda\in(0,\lambda_0)$,  $n\in\mathbb{N}$ and $t\in [nT,(n+1)T]$, one obtains
        \[W_2(\mathcal{L}(\zt),\mathcal{L}(Z_t^\lambda))\leq C_8 \lambda^{1/8},\] where $C_8=\max\{C_{6},\sqrt{C_{6}}\}C_{W_2}/\left(1-e^{-C_{r_2}/2}\right)$.
        \end{lemma}
    \begin{proof}
       Recall that $\mathcal{L}(Z_t^\lambda)=\mathcal{L}(\zeta^{\lambda,0}_t)$ so using the triangle inequality for the Wasserstein distance one deduces that 
        \[\begin{aligned}
            W_2(\mathcal{L}(\zt),\mathcal{L}(Z_t^\lambda)) 
            &\leq\sum_{k=1}^n W_2\left(\mathcal{L}\left(\bar{\zeta}_t^{\lambda, k}\right), \mathcal{L}\left(\bar{\zeta}_t^{\lambda, k-1}\right)\right)
            \\&=\sum_{k=1}^n W_2\left(\mathcal{L}\left({\zeta}_t^{kT, \bar{\theta}^\lambda_{kT},\lambda}\right), \mathcal{L}\left({\zeta}_t^{(k-1)T, \bar{\theta}^\lambda_{(k-1)T},\lambda}\right)\right)
            \\&=\sum_{k=1}^n W_2\left(\mathcal{L}\left({\zeta}_t^{kT, \bar{\theta}^\lambda_{kT},\lambda}\right), \mathcal{L}\left({\zeta}_t^{kT, \bar{\zeta}^{\lambda,k-1}_{kT},\lambda}\right)\right)
            \\&\leq 
           C_{W_2} \sum_{k=1}^n \exp (-C_{r_2}(n-k)\lambda T) \\&\times \max\left\{W_2\left(\mathcal{L}\left(\bar{\theta}_{k T}^\lambda\right), \mathcal{L}\left(\bar{\zeta}_{k T}^{\lambda, k-1}\right)\right),\sqrt{W_1\left(\mathcal{L}\left(\bar{\theta}_{k T}^\lambda\right), \mathcal{L}\left(\bar{\zeta}_{k T}^{\lambda, k-1}\right)\right)}\right\},
        \end{aligned}\]
        where the first two equalities are deduced by the definition of the auxiliary process and the last inequality by the contraction property in Proposition \ref{prop3}.
        This further implies, due to $\lambda T=\lambda\lfloor 1/\lambda\rfloor\in(1/2,1]$ and the discretization error estimates from Lemma \ref{Lemma6}
        \[\begin{aligned}W_2(\mathcal{L}(\zt),\mathcal{L}(Z_t^\lambda))&\leq C_{W_2} \sum_{k=1}^n \exp(\frac{C_{r_2}}{2}(n-k)) \max\{C_6,\sqrt{C_{6}}\} \lambda^{1/8}
        \\&\leq  C_{W_2} \frac{1}{1-e^{-C_{r_2}/2}} \max\{C_{6},\sqrt{C_{6}}\} \lambda^{1/8}.
        \end{aligned}\]
    \end{proof}

\begin{lemma}\label{lemma-contr-mon}
    Let Assumptions \hyperlink{A1}{A1}, \hyperlink{A3}{A3}, \hyperlink{A4}{A4}, \hyperlink{A5}{A5} and $\lambda_0\in (0,\mu/(2L^2)).$ Then, there exists $C_9>0$, such that for every $\lambda<\lambda_0$, $n\in \mathbb{N}$ and $t\in (nT,(n+1)T]$, one obtains 
    \[W_2(\mathcal{L}(\zt),\mathcal{L}(Z_t^\lambda))\leq C_9 \lambda^{1/4},\]
    where $C_9=C_6C_{W_2}^{*}/(1-e^{-C_{r_3}/2})$.
\end{lemma}
\begin{proof}
The proof is similar to the previous ones, the difference being that we use an improved contraction result of Proposition \ref{prop-mon}
    \[\begin{aligned}
            W_2(\mathcal{L}(\zt),\mathcal{L}(Z_t^\lambda)) 
            &\leq\sum_{k=1}^n W_2\left(\mathcal{L}\left(\bar{\zeta}_t^{\lambda, k}\right), \mathcal{L}\left(\bar{\zeta}_t^{\lambda, k-1}\right)\right)
            \\&=\sum_{k=1}^n W_2\left(\mathcal{L}\left({\zeta}_t^{kT, \bar{\theta}^\lambda_{kT},\lambda}\right), \mathcal{L}\left({\zeta}_t^{(k-1)T, \bar{\theta}^\lambda_{(k-1)T},\lambda}\right)\right)
            \\&=\sum_{k=1}^n W_2\left(\mathcal{L}\left({\zeta}_t^{kT, \bar{\theta}^\lambda_{kT},\lambda}\right), \mathcal{L}\left({\zeta}_t^{kT, \bar{\zeta}^{\lambda,k-1}_{kT},\lambda}\right)\right)
            \\&\leq 
           C_{W_2}^* \sum_{k=1}^n \exp (-C_{r_3}(n-k)\lambda T) W_2\left(\mathcal{L}(\bar{\theta}_{k T}^\lambda), \mathcal{L}\left(\bar{\zeta}_{k T}^{\lambda, k-1}\right)\right)\\&\leq C_{W_2}^*
            \sum_{k=1}^n \exp (-\frac{C_{r_3}}{2}(n-k))C_6 \lambda^{1/4}
            = C_{W_2}^* \frac{1}{1-e^{-C_{r_3}/2}}C_6 \lambda^{1/4}.
           \end{aligned}\]
\end{proof}
\section{Estimates for the excess risk optimization problem}
\begin{lemma}\label{lemmaT1} Let Assumptions \hyperlink{A1}{A1}-\hyperlink{A4}{A4} hold and $\lambda_0\in(0,\mu/(2L^2))$. Then, for every $\lambda\in(0,\lambda_0)$ and  $n\in\mathbb{N}$, the following bound for $\mathcal{T}_1=\mathbb{E}[u(\theta_n^{\lambda})]-\mathbb{E}[u(\theta_{\infty})]$ holds
\begin{align}
    \mathcal{T}_1\leq C_{\mathcal{T}_1}W_2(\mathcal{L}(\theta_n^{\lambda}),\pi_{\beta}),
\end{align}
where $C_{\mathcal{T}_1}=m+(L/2)\sqrt{\mathbb{E}|\theta_0|^2}+(L/2)\sqrt{C_{\sigma}}$ and $C_{\sigma}=(\mu+2b+2d/\beta)/\mu$. 
    \begin{proof}
        We notice that the function $g(t)=u(tx+(1-t)y)$ is locally Lipschitz continuous (since $u$ is semi-convex) so it has a bounded variation in $[0,1].$ Then, one can enforce the fundamental theorem of calculus since $g'(t)=\langle h(tx+(1-t)y),x-y\rangle$ a.e. Thus, one writes
        \begin{align}
            u(x)-u(y)&=\int_0^1\langle x-y,h((1-t)y+tx) \rangle dt\leq \int_0^1|x-y||h((1-t)y+tx)| dt\nonumber\\
            &\leq \int_0^1|x-y|(m+L|(1-t)y+tx|) dt\leq (m+(L/2)|x|+(L/2)|y|)|x-y|\label{eqT1_1},
        \end{align}
        where we have used Cauchy-Schwarz and the growth Assumption \hyperlink{A1}{A1}. Now let $(X,Y)$ be the coupling of $\mu,\ \nu$ that achieves $W_2(\mu,\nu)$, that is $W_2^2(\mu, \nu)=\mathbb{E}|X-Y|^2$ for $\mathcal{L}(X)=\mu$ and $\mathcal{L}(Y)=\mu$. Taking expectations in \eqref{eqT1_1} and using Minkowski's inequality, yields
        \begin{align}
            \int_{\mathbb{R}^d}g d\mu-\int_{\mathbb{R}^d}g d\nu &= \mathbb{E}[g(X)-g(Y)]\leq \sqrt{\mathbb{E}[(m+(L/2)|x|+(L/2)|y|)^2]}\sqrt{\mathbb{E}|X-Y|^2}\nonumber\\
            &\leq \left(m+(L/2)\sqrt{\mathbb{E}|X|^2}+(L/2)\sqrt{\mathbb{E}|Y|^2}\right)W_2(\mu,\nu)\label{eqT1_2}.
        \end{align}
        One concludes by applying inequality \eqref{eqT1_2} for $X=u(\theta_n^{\lambda})$ and $Y=u(\theta_{\infty})$
        \begin{align}
            \mathbb{E}[u(\theta_n^{\lambda})]-\mathbb{E}[u(\theta_{\infty})]\leq \left(m+(L/2)\sqrt{\mathbb{E}|\theta_0|^2}+(L/2)\sqrt{C_{\sigma}}\right)W_2(\mathcal{L}(\theta_n^{\lambda}),\pi_{\beta}),
        \end{align}
        where $C_{\sigma}$ is the second-moment of $\pi_{\beta}$. Since $\pi_{\beta}$ is the invariant measure of SDE \eqref{eq:001}, there holds $\int_{\mathbb{R}^d}\mathcal{L}V(x)\pi_{\beta}(dx)=0$. Due to \eqref{eq:004}, one estimates the constant by
        \begin{align}
            C_{\sigma}\leq\int_{\mathbb{R}^d}V(x)\pi_{\beta}(dx)\leq -\mu\int_{\mathbb{R}^d}\mathcal{L}V(x)\pi_{\beta}(dx)+(\mu+2b+2d/\beta)/\mu\leq(\mu+2b+2d/\beta)/\mu.
        \end{align}
    \end{proof}
\end{lemma}
\begin{lemma}\label{lemmaT2} Let Assumptions \hyperlink{A1}{A1}-\hyperlink{A4}{A4} hold. For any $\beta\geq \max\{4/\mu,M^{-1}\}$, the following bound for $\mathcal{T}_2=\mathbb{E}[u(\theta_{\infty})]-u(\theta^*)$ holds
\begin{align}
    \mathcal{T}_2\leq \dfrac{d}{2\beta}\log\left( \dfrac{2e(b+d/\beta)\beta^2M^2}{\mu d}\right) + \frac{2}{\beta}- \frac{1}{\beta} \log (S_d/d)+\frac{d}{\beta}\log(\beta M). 
\end{align} where the associated constants are given explicitly in the proof.
    \begin{proof}
        We follow a similar approach as in Section 3.5 of \cite{Raginsky}, making necessary adjustments due to the lack of a smoothness condition for the gradient $\nabla u(x):=h(x)$. According to \cite{Raginsky}, one obtains the following bound
        \begin{equation}\label{T2_eq1}
            \begin{aligned}
            \mathbb{E}[u(\theta_{\infty})]\leq \dfrac{d}{2\beta}\log\left(\dfrac{4\pi e(b+d/\beta)}{\mu d}\right)-\dfrac{1}{\beta}\log Z,
        \end{aligned}
        \end{equation}
        
        where $Z:=\int_{\mathbb{R}^d}e^{-\beta u(x)}dx$ is the normalization constant. One writes
        \begin{align}
            \log Z=\log\int_{\mathbb{R}^d}e^{-\beta u(x)}dx=-\beta u(\theta^*)+\log\int_{\mathbb{R}^d}e^{\beta(u(\theta^*)- u(x))}dx.\label{T2_eq2}
        \end{align}
        Now we provide an upper bound for the second term of \eqref{T2_eq2}. For the remainder of this analysis, one chooses the version of the subgradient $h(x)$ such that $h(\theta^*)=0$. Therefore, property \eqref{eqR1} immediately implies $|\theta^*|\leq\sqrt{b/(2\mu)}=R_2$. Consequently, one calculates that
        \begin{align*}
            -(u(\theta^*)-u(x))&\leq|u(\theta^*)-u(x)|\leq\int_0^1\left|\langle h(x+t(\theta^*-x)),\theta^*-x\rangle\right |dt\leq\int_0^1|h(x+t(\theta^*-x))||\theta^*-x|dt\\
            &\leq\int_0^1\left(m+L|x|+tL|\theta^*-x|\right)|\theta^*-x|dt\leq\int_0^1\left(m+L|\theta^*-x|+L|\theta^*|+tL|\theta^*-x|\right)|\theta^*-x|dt\\
            &\leq (3L/2)|\theta^*-x|^2+(m+LR_2)|\theta^*-x|.
        \end{align*}
        Hence we obtain
        \begin{align}
            I&=\int_{\mathbb{R}^d}e^{\beta(u(\theta^*)-u(x))}dx\geq \int_{\mathbb{R}^d}e^{-\beta (3L/2)|\theta^*-x|^2-\beta(m+LR_2)|\theta^*-x|}dx=\int_{\mathbb{R}^d} e^{-\beta M( |\theta^*-x|+|\theta^*-x|^2)}dx
        \end{align}
        where $M=3L/2+m+LR_2.$
        Changing to radial coordinates one obtains \[I=S_d \int_0^{\infty} e^{-\beta M\left(r+r^2\right)} r^{d-1} d r\]
        where $S_d=2 \pi^{d / 2}\Gamma^{-1}(d / 2).$
Assuming that $\beta M\geq 1,$ \[I\geq S_d \int_0^{(\beta M)^{-1}} e^{-\beta M\left(r+r^2\right)} r^{d-1} d r\geq S_d \int_0^{(\beta M)^{-1}}e^{-2} r^{d-1}dr= S_d d^{-1} e^{-2} (\beta M)^{-d}.\]
Combining the aforementioned bounds with equation \eqref{T2_eq2} leads to
        \begin{align*}
            \frac{1}{\beta}\log Z&\geq - u(\theta^*)-\frac{1}{\beta}\log(2)+ \frac{1}{\beta} \log (S_d/d)-\frac{d}{\beta}\log(\beta M).\end{align*}
In view of \eqref{eqT1_1}, one concludes with 
\[\E[ u(\theta_\infty)]-u(\theta^*)\leq \dfrac{d}{2\beta}\log\left( \dfrac{2e(b+d/\beta)\beta^2M^2}{\mu d}\right) + \frac{2}{\beta}- \frac{1}{\beta} \log (S_d/d)+\frac{d}{\beta}\log(\beta M).\]

    \end{proof}
\end{lemma}
\section{Proofs of Section 3}
\hypertarget{proofTh1}{}\textbf{Proof of Theorem \ref{Theorem1}}
\begin{proof}
Let \( N \in \mathbb{N} \) and set \( n = \lfloor N/T \rfloor \), then \( N \in [nT, (n+1)T] \). Fix \( \lambda \in (0, \lambda_0) \), and \( t \in [nT, (n+1)T] \). Then, for $p\in\{1,2\}$, by the triangle inequality,
\begin{align}
    W_p\left(\mathcal{L}(\theta_N^\lambda), \pi_{\beta}\right)
    &\leq W_p\left(\mathcal{L}(\bar{\theta}_N^{\lambda}), \mathcal{L}(\bar{\zeta}_N^{\lambda,n})\right)
    + W_p\left(\mathcal{L}(\bar{\zeta}_N^{\lambda,n}), \mathcal{L}(Z_N^\lambda)\right)
    + W_p\left(\mathcal{L}(Z_N^\lambda), \pi_\beta\right). \nonumber
\end{align}
By Lemmata~\ref{Lemma6}-\ref{Lemma8} and Propositions~\ref{prop2}-\ref{prop3}, these three terms satisfy
\begin{gather*}
    W_p\left(\mathcal{L}(\bar{\theta}_N^{\lambda}), \mathcal{L}(\bar{\zeta}_N^{\lambda,n})\right)\leq W_2\left(\mathcal{L}(\bar{\theta}_N^{\lambda}), \mathcal{L}(\bar{\zeta}_N^{\lambda,n})\right)\leq C_6\lambda^{1/4},\\
    W_1\left(\mathcal{L}(\bar{\zeta}_N^{\lambda,n}), \mathcal{L}(Z_N^\lambda)\right)\leq C_7\lambda^{1/4} \text{ and } W_2\left(\mathcal{L}(\bar{\zeta}_N^{\lambda,n}), \mathcal{L}(Z_N^\lambda)\right)\leq C_8\lambda^{1/8},\\
    W_p\left(\mathcal{L}(Z_N^\lambda), \pi_\beta\right)\leq C_{W_p}e^{-C_{r_p}\lambda N}\Delta_0^{(p)} ,
\end{gather*}
where
\begin{align*}
     \Delta_0^{(1)}=W_1\left(\mathcal{L}(\theta_0), \pi_\beta\right),\ \ \Delta_0^{(2)}=\max\left\{ W_2\left(\mathcal{L}(\theta_0), \pi_\beta\right), \sqrt{W_1\left(\mathcal{L}(\theta_0), \pi_\beta\right)} \right\}.
\end{align*}
Combining the three bounds yields, for $p\in\{1,2\}$,
\begin{gather*}
   W_1\left(\mathcal{L}(\theta_N^{\lambda}),\pi_{\beta}\right)\leq C_{W_1}e^{-C_{r_1}\lambda N}\Delta_0^{(1)}+(C_6+C_7)\lambda^{1/4},\\ W_2\left(\mathcal{L}(\theta_N^{\lambda}),\pi_{\beta}\right)\leq C_{W_2}e^{-C_{r_2}\lambda N}\Delta_0^{(2)}+(C_6+C_8)\lambda^{1/8}. 
\end{gather*}
Setting $C_{T_1}=C_6+C_7$ and $C_{T_2}=C_6+C_9$, gives the final statement.
\end{proof}
\hypertarget{proofTh3}{}\noindent \textbf{Proof of Theorem \ref{theo-impr}}
\begin{proof}
    Let \( N \in \mathbb{N} \) and set \( n = \lfloor N/T \rfloor \), then \( N \in [nT, (n+1)T] \). Therefore, taking into account the results of Lemmata~\ref{Lemma6}, \ref{lemma-contr-mon} and Proposition \ref{prop-mon}, it follows that for every \( \lambda \in (0, \lambda_0) \), \( n \in \mathbb{N} \), and \( t \in [nT, (n+1)T] \), one has
    \[\begin{aligned}
    W_2\left(\mathcal{L}(\theta_N^\lambda), \pi_\beta\right)
    &\leq W_2\left(\mathcal{L}(\bar{\theta}_N^\lambda), \mathcal{L}(\bar{\zeta}_N^{\lambda,n})\right)
    + W_2\left(\mathcal{L}(\bar{\zeta}_N^{\lambda,n}), \mathcal{L}(Z_N^\lambda)\right)
    + W_2\left(\mathcal{L}(Z_N^\lambda), \pi_\beta\right) \\
    &\leq C_{W_2}^{*} e^{-C_{r_3} \lambda N}W_2(\mathcal{L}(\theta_0),\pi_\beta) + (C_6+C_{9}) \lambda^{1/4}. \end{aligned}\]
\end{proof}
\hypertarget{proofTh4}{}\noindent \textbf{Proof of Theorem }\ref{Theorem3}
\begin{proof}
    Fix $n\in\mathbb{N}, \lambda\in(0,\lambda_0)$, and $\beta\geq \max\{4/\mu,M^{-1}\}$. Consider an independent draw $\theta_{\infty}\sim\pi_{\beta}$, then one decomposes the excess risk error
    \begin{align*}
        \mathbb{E}\left[u(\theta^{\lambda}_n)\right]-u(\theta^*)=\left(\mathbb{E}\left[u(\theta^{\lambda}_n)\right]-\mathbb{E}\left[u(\theta_{\infty})\right]\right)+\left(\mathbb{E}\left[u(\theta_{\infty})\right]-u(\theta^*)\right):=\mathcal{T}_1+\mathcal{T}_2.
    \end{align*}
    Under Assumptions \hyperlink{A1}{A1}-\hyperlink{A4}{A4}, using the inequalities from Lemmata \ref{lemmaT1}-\ref{lemmaT2}, yields the final bound.
\end{proof}
\section{Auxiliary Remarks}
\noindent\hypertarget{proofRemark}{}\textbf{Proof of Lemma \ref{remark1}}
\begin{proof}
    Let $|x|\geq R$, then through \hyperlink{A2}{A2} one obtains
    \begin{align}
        \langle x,h(x) \rangle&=\langle x-0,h(x)-h(0)\rangle +\langle x,h(0)\rangle\geq \mu|x|^2-|x||h(0)|\geq \dfrac{\mu}{2}|x|^2-\dfrac{|h(0)|}{2\mu}.\label{eqR11}
    \end{align}
    Now let $|x|<R$, due to the linear growth in \hyperlink{A1}{A1} one writes
    \begin{align}
        \langle x,h(x)\rangle &\geq -|x||h(x)|\geq -m|x|-L|x|^2\geq -mR-LR^2+\dfrac{\mu}{2}R^2-\dfrac{\mu}{2}R^2\nonumber\\
        &\geq \dfrac{\mu}{2}|x|^2-\left(mR+\left(L+\mu/2\right)R^2\right).\label{eqR12}
    \end{align}
    Combining \eqref{eqR11} and \eqref{eqR12} yield \eqref{eqR1}, where $b=\max\left(|h(0)|/(2\mu),mR+\left(L+\mu/2\right)R^2\right)$.
\end{proof}
\noindent
\noindent\hypertarget{Remark4}{}\begin{remark}
    Let $R>0$ and suppose $u(x)\in C(\mathbb{R}^d)$ and is given by $u(x)=\begin{cases} u_1(x), |x|\leq R\\ u_2(x), |x|>R\end{cases}, $ where $u_1, u_2:\mathbb{R}^d\to\mathbb{R}$ admit the gradients $h_1=\nabla u_1,\ h_2=\nabla u_2$ such that $$h_{1,2,R}=\max\left\{\sup_{|x|\leq R}|h_1(x)|,\sup_{|x|\leq R}|h_2(x)|\right\}<\infty.$$Moreover, $u_2$ is $\mu$-strongly convex. Then $u$ is $\mu/2$-strongly convex at infinity, outside the ball $\mathcal{B}\left(0,(2\sqrt{2}/\mu)h_{1,2,R}\right).$
    \begin{proof}
        Let $x\in\mathcal{B}(0,R)$ and $y\notin\mathcal{B}(0,R)$, one writes
        \begin{align*}
            \langle x-y,h(x)-h(y) \rangle&=\langle x-y,h_1(x)-h_2(y)=\langle x-y,h_2(x)-h_2(y) \rangle+ \langle x-y,h_1(x)-h_2(x) \rangle\\
            &\geq \mu|x-y|^2-|x-y||h_1(x)-h_2(x)|\\
            &\geq \mu|x-y|^2-(\mu/4)|x-y|^2-(1/\mu)|h_1(x)-h_2(x)|^2\\
            &\geq (3\mu/4)|x-y|^2-(2/\mu) h^2_{1,2,R}=(3\mu/4)|x-y|^2-(\mu/4) \bar{R}^2.
        \end{align*}
        Hence for any $x,y$ such that $|x-y|>\bar{R}=(2\sqrt{2}/\mu)h_{1,2,R}$, one obtains
        \begin{align*}
            \langle x-y,h(x)-h(y) \rangle \geq (\mu/2)|x-y|^2.
        \end{align*}\end{proof}
\end{remark}
\section{Complimentary numerical experiments}\label{Mog_sims}
This section provides additional simulation results expanding upon Subsection \ref{Mog_main}, illustrating the behavior of SGULA under varying stepsizes and inverse temperature parameters in the same Gaussian mixture with Laplacian prior setting.\\
First we explore the behavior of SGULA across a range of stepsizes: $\lambda=\{10^{-1},10^{-2},10^{-3},10^{-4},10^{-5}\}$. In Figure \ref{MOG_timestep} we observe that a moderate stepsize $(\lambda = 10^{-3})$ yields the most faithful approximation to the true target density, successfully recovering the modal structure and allocating mass in accordance with the true distribution. For larger stepsizes, while the major modes are still detected, some modes appear overrepresented while others are suppressed, indicating that the sampler suffers from discretization bias. This is consistent with the well known behavior of Langevin based samplers, large stepsizes cause the discrete time dynamics to deviate from the continuous time Langevin diffusion, leading to biased stationary distributions. Conversely, when using smaller stepsizes, the shape of the empirical distribution deteriorates despite correct mode localization. The samples appear fragmented or noisy, with reduced mass between modes. This occurs because smaller stepsizes slow down the exploration of the space, leading to poor mixing and high autocorrelation between samples. As a result, the Markov chains may not adequately transition across modes within the finite computational budget, even if the theoretical bias is small.
\begin{figure}[H]
  \centering
    \includegraphics[scale=0.535]{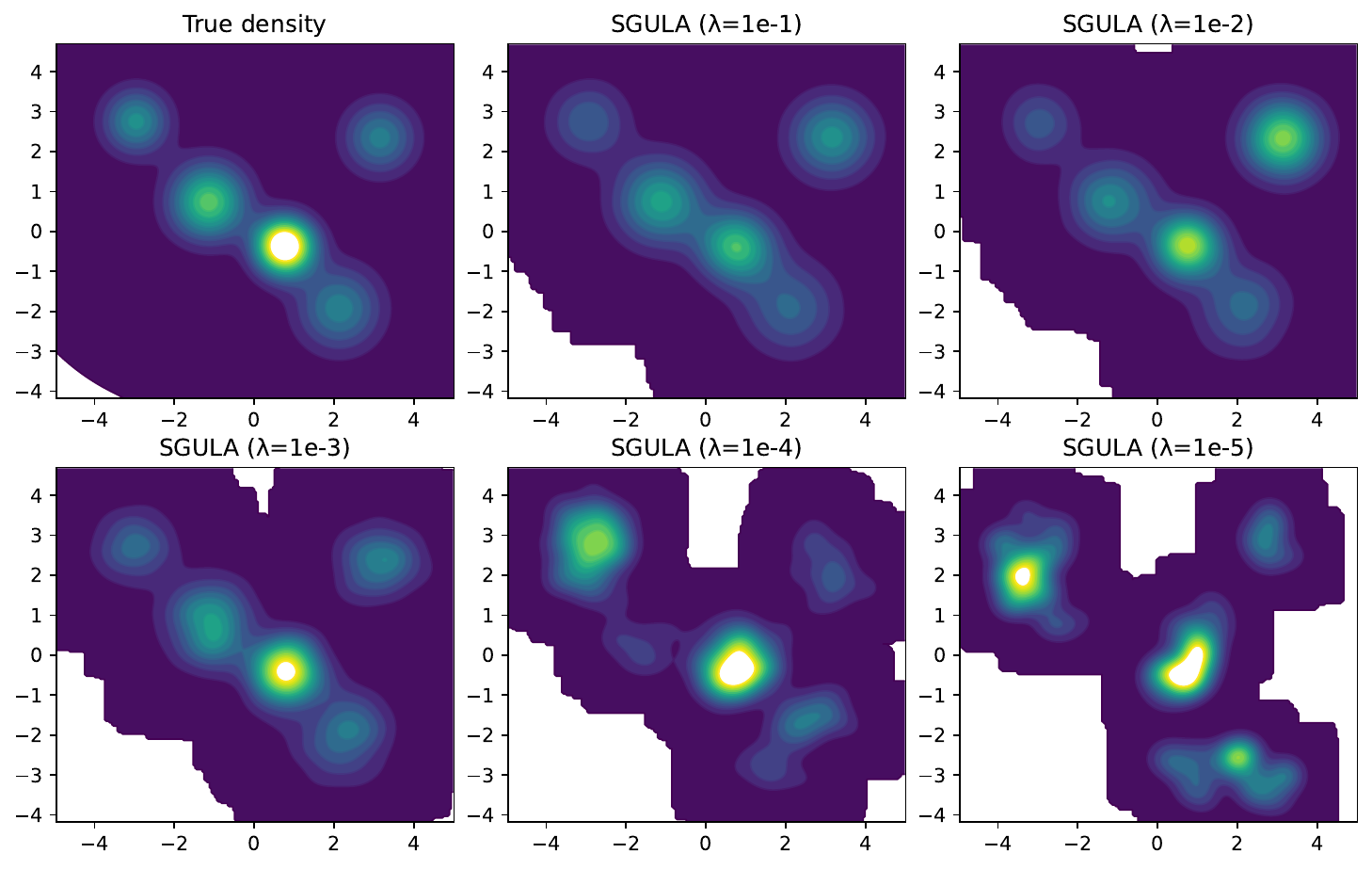}
  \caption{SGULA applied to a two-dimensional Gaussian mixture with Laplace prior, for varying stepsizes.}
  \label{MOG_timestep}
\end{figure}
\noindent Subsequently we run the same experiment for a fixed stepsize $\lambda=10^{-3}$ and varying inverse temperature parameters $\beta=\{100,100,5,2,1\}$. In Figure \ref{MOG_invtemp} we observe that as $\beta$ increases, the sampler increasingly concentrates around local maxima of the original density. This results in sharp, isolated regions of mass but many modes become underrepresented or completely missing. This behavior is desirable in optimization contexts (e.g., MAP estimation), where identifying a single mode is sufficient, but it undermines full posterior exploration in Bayesian settings. This phenomenon is well understood in the context of Langevin-type algorithms as increasing $\beta$ steepens the potential. As a result, the sampler rapidly descends into local minima and becomes metastable, i.e., it takes exponentially long to escape a mode, especially in multi-modal or semi-convex targets.
\begin{figure}[h]
  \centering
    \includegraphics[scale=0.535]{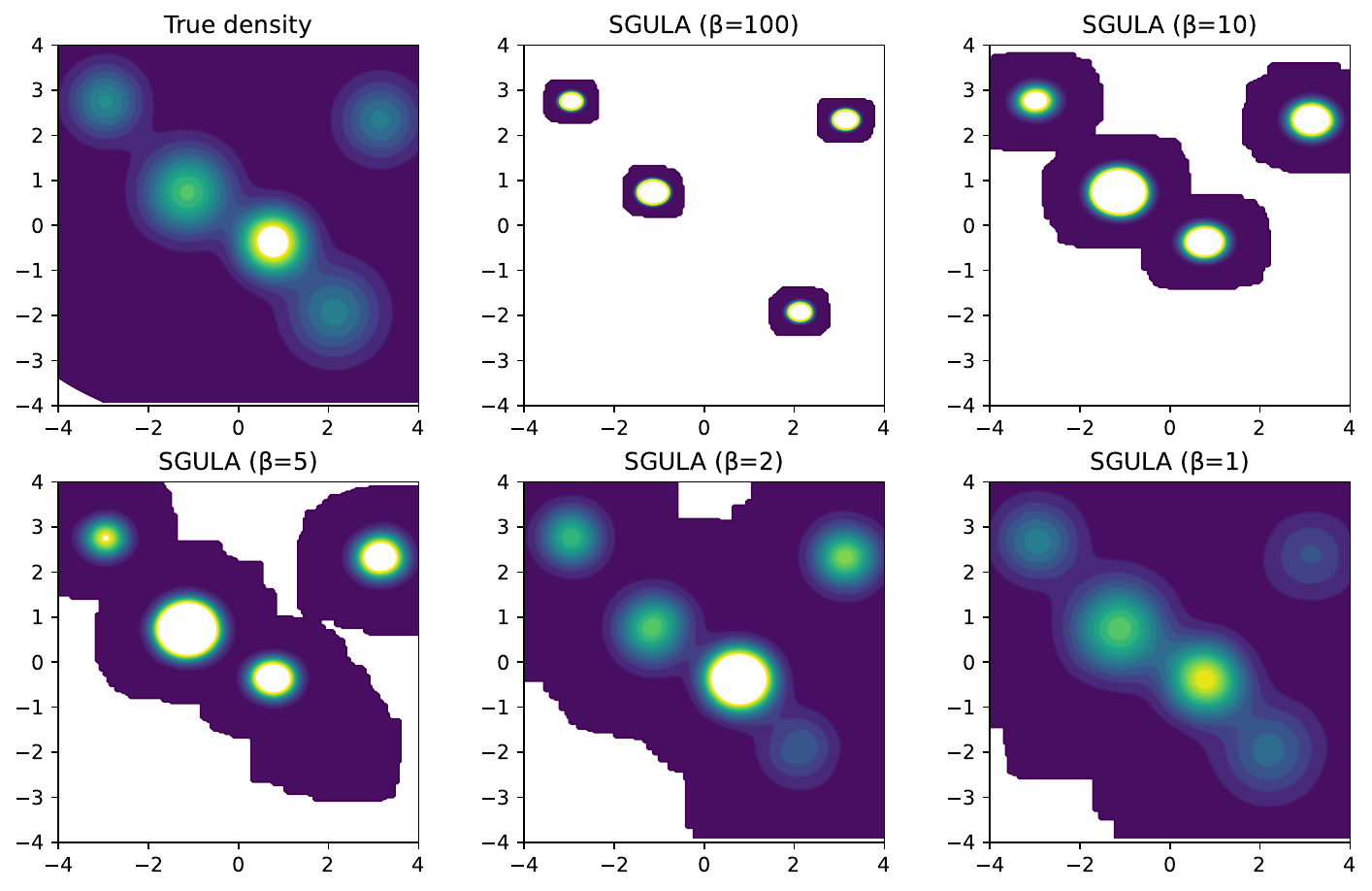}
  \caption{SGULA applied to a two-dimensional Gaussian mixture with Laplace prior, for varying inverse temperature parameters.}
  \label{MOG_invtemp}
\end{figure}
\begin{table*}[ht]
\caption{Analytic expressions of constants.}\hypertarget{table}
\centering
\[
\begin{array}{|c|c|l|}
\hline
\textbf{No.} & \textbf{Constants} & \textbf{dim}\\
\hline
1 & \displaystyle b=\max\left(|h(0)|/(2\mu),mR+\left(L+(\mu/2)\right)R^2\right) & \mathcal{O}(1) \\
\hline
2 & \displaystyle C_1=(4/\mu)(b+d/\beta) & \mathcal{O}(d) \\
\hline
3 & \displaystyle C_2=(2b+2d/\beta+\mu m^2/L^2)/(\mu-2\lambda L^2) & \mathcal{O}(d)\\
\hline
4 & \displaystyle C_3=\left(2\mu^2/L^2+2\right)C_2+ (2\mu/L^2)\left(\mu m^2/L^2+2d/\beta\right)& \mathcal{O}(d) \\
\hline
5 & \displaystyle C_4=2\mu C_3+2\mu m^2/L^2+4d/\beta & \mathcal{O}(d)\\
\hline
6 & \displaystyle C_5=C_3+2(d/\beta+b)& \mathcal{O}(d)\\
\hline
7 & \displaystyle C_{6}=e^{2K}\sqrt{C_4(1+\mathbb{E}|\theta_0|^2)}\left(\sqrt{C_4(1+\mathbb{E}|\theta_0|^2)}+2L\left(1+ \sqrt{C_5(1+\mathbb{E}|\theta_0|^2)}+\sqrt{C_{2}(1+\mathbb{E}|\theta_0|^2)}\right)\right)& \mathcal{O}(d) \\
\hline
8& \displaystyle C_{W_1}=2e^{\beta KR^2/8} & \mathcal{O}(1)\\
\hline
9 & \displaystyle C_{r_1}=2\beta^{-1} C_0^{'} & \mathcal{O}(1) \\
\hline
10 & \displaystyle C_0^{'}=   
    \begin{cases} 
\dfrac{2}{3e}\min(1/R^2,\mu\beta/8)& \text{if } \beta KR^2\leq8, \\
\left(8\sqrt{2\pi}R^{-1}(\beta K)^{-1/2}((\beta K)^{-1}+(\beta \mu)^{-1})\exp(\beta KR^2/8)+32(\beta\mu R)^{-2}\right)^{-1}& \text{if } \beta KR^2\geq8.
\end{cases} & \mathcal{O}(1)  \\
\hline
11 & \displaystyle C_{W_2}=2\max\{1,R^{-1/2}\}C_0^{''}(\epsilon)e^{\left(\sqrt{\beta/32}(\mu+K)+\epsilon/2\right)\beta R^2/2}\sqrt{(2/\beta)\max\{4/\epsilon+2,8/(e\epsilon^2)\}/(\sqrt{\beta/2} R+1)}& \mathcal{O}(1)\\
\hline
12 & \displaystyle C_{r_2}=2\min\{1,1/\epsilon\}e^{-(1/4)\sqrt{(\beta/2)^3}(\mu+K)R^2}/C_0^{''}(\epsilon) & \mathcal{O}(1) \\
\hline
13 & \displaystyle C_0^{''}(\epsilon)=\max\left\{\dfrac{2e^2}{\epsilon}\left(1+\dfrac{2}{\sqrt{\epsilon}}\right)\sqrt{\dfrac{2}{\sqrt{\beta/8}\mu-\epsilon}},\dfrac{2+\sqrt{\epsilon}}{\epsilon(1-e^{-2})}\left[\dfrac{2\sqrt{2}e^2}{\sqrt{\epsilon(\sqrt{\beta/8}\mu-\epsilon)}}+\dfrac{1}{\sqrt{\beta/8}\mu-\epsilon}\right]\right\} & \mathcal{O}(1) \\
\hline
14 & \displaystyle C_{W_2}^*=\sqrt{1+(2d)^{-1}\beta (2K+\mu)(2+2K/\mu)^{2/d}}  & \mathcal{O}(d^{-1}) \\
\hline
15 & \displaystyle C_{r_3}=\mu/4 & \mathcal{O}(1)  \\
\hline
16 & \displaystyle C_7=C_6C_{W_1}/\left(1-e^{-C_{r_1}/2}\right) & \mathcal{O}(d) \\
\hline
17 & \displaystyle C_8=\max\{C_{6},\sqrt{C_{6}}\}C_{W_2}/\left(1-e^{-C_{r_2}/2}\right) & \mathcal{O}(d) \\
\hline
18 & \displaystyle C_9=C_6C_{W_2}^*/\left(1-e^{-C_{r_3}/2}\right)  & \mathcal{O}(d)\\
\hline
19 & \displaystyle C_{T_1}=C_6(1+C_{W_1}/(1-e^{-C_{r_1}/2})) & \mathcal{O}(d)\\
\hline
20 & \displaystyle C_{T_2}=C_6+\max\{C_6,\sqrt{C_6}\}C_{W_2}/\left(1-e^{-C_{r_2}/2}\right) & \mathcal{O}(d) \\
\hline
21 &\displaystyle C_{T_3}=C_6(1+C_{W_2}^*/(1-e^{-C_{r_3}/2})) & \mathcal{O}(d) \\
\hline
22 & \displaystyle C_{\mathcal{T}_1}=m+(L/2)\sqrt{\mathbb{E}|\theta_0|^2}+(L/2)\sqrt{(\mu+2b+2d/\beta)/\mu} & \mathcal{O}(d^{1/2}) \\
\hline
23 & \displaystyle M=m+3L/2+L\sqrt{b/(2\mu)}& \mathcal{O}(1)\\
\hline
\end{array}
\]
\end{table*}

\end{document}